\begin{document}

\newcommand{\function}[2]{#1_{/#2}}


\newcommand{\tab}{\hspace{20pt}}
\newcommand{\doubletab}{\tab\tab}
\newcommand{\connective}[1]{\bf #1 \;}
\newcommand{\predicate}[1]{\rm #1}
\newcommand{\constant}[1]{\rm #1}
\newcommand{\variable}[1]{\tt #1}
\newcommand{\rowVariable}[1]{\tt @#1}
\newcommand{\true}{\it true}
\newcommand{\false}{\it false}

\newcommand{\quantifier}[1]{\overline{Q#1}}

\newcommand{\nnf}[1]{nnf(#1)}
\newcommand{\negnnf}[1]{\widehat{#1}}


\newcommand{\textVariable}[1]{{\it{?#1}}}
\newcommand{\textConstant}[1]{{\it{#1}}}
\newcommand{\textFunction}[1]{{\it{#1}}}
\newcommand{\textPredicate}[1]{{\it{#1}}}
\newcommand{\metaConstant}[1]{{\it{\dollar #1}}}
\newcommand{\metaPredicate}[1]{{\dollar #1}}
\newcommand{\dollar}{\$}

\newcommand{\corrected}[1]{{\underline{#1}}}


\newcommand{\TT}[1]{TT(#1)}
\newcommand{\FT}[1]{FT(#1)}

\newcommand{\TTP}[1]{TT_0(#1)}
\newcommand{\FTP}[1]{FT_0(#1)}

\newcommand{\TTPP}[1]{TT_{ncf}(#1)}


\newcommand{\WORDNET}{WordNet}
\newcommand{\SUMO}{SUMO}
\newcommand{\CYC}{Cyc}
\newcommand{\DOLCE}{DOLCE}
\newcommand{\KIFDOLCE}{KIF-DOLCE}
\newcommand{\CASLDOLCE}{CASL-DOLCE}
\newcommand{\YAGO}{YAGO}
\newcommand{\TPTPSUMO}{TPTP-SUMO}
\newcommand{\ADIMENSUMO}{Adimen-SUMO}
\newcommand{\KEPLER}{FPK}
\newcommand{\DBPEDIA}{DBpedia}


\newtheorem{theorem}{Theorem}
\newtheorem{example}[theorem]{Example}
\newtheorem{lemma}[theorem]{Lemma}
\newtheorem{proposition}[theorem]{Proposition}
\newtheorem{definition}[theorem]{Definition}
\newtheorem{remark}[theorem]{Remark}

\title{Automatic White-Box Testing of First-Order Logic Ontologies}

\author{{\sc Javier \'{A}lvez, Montserrat Hermo, Paqui Lucio, German Rigau}\\[2pt]
Facultad de Inform\'atica, University of the Basque Country UPV/EHU, \\
Paseo Manuel de Lardizabal, 1, 20018-San Sebasti\'an, Spain.\\
}
\pagestyle{headings}
\markboth{J. \'ALVEZ, M. HERMO, P. LUCIO, G: RIGAU}{\sc Automatic White-Box Testing of First-Order Logic Ontologies}
\maketitle

\begin{abstract}
{
Formal ontologies are axiomatizations in a logic-based formalism. The development of formal ontologies is generating considerable research on the use of automated reasoning techniques and tools that help in ontology engineering. One of the main aims is to refine and to improve axiomatizations for enabling automated reasoning tools to efficiently infer reliable information. Defects in the axiomatization can not only cause wrong inferences, but can also hinder the inference of expected information, either by increasing the computational cost of, or even preventing, the inference. \\
In this paper, we introduce a novel, fully automatic white-box testing framework for first-order logic (FOL) ontologies.  Our methodology is based on the detection of inference-based redundancies in the given axiomatization. The application of the proposed testing method is fully automatic since a) the automated generation of tests is guided only by the syntax of axioms and b) the evaluation of tests is performed by automated theorem provers. Our proposal enables the detection of defects and serves to certify the grade of suitability --for reasoning purposes-- of every axiom. We formally define the set of tests that are (automatically) generated from any axiom and prove that every test is logically related to redundancies in the axiom from which the test has been generated. We have implemented our method and used this implementation to automatically detect several non-trivial defects that were hidden in various first-order logic ontologies. Throughout the paper we provide illustrative examples of these defects, explain how they were found, and how each proof --given by an automated theorem-prover--provides useful hints on the nature of each defect.  Additionally, by correcting all the detected defects, we have obtained an improved version of one of the tested ontologies: \ADIMENSUMO.
}
{Automated Theorem Proving, Fisrt-Order Logic, Ontology, Knowledge representation, White-box testing.}
\end{abstract}


\section{Introduction} \label{section:introduction}

Formal ontology development \cite{noy2001ontology,GuW04,Gru09,StS09} is a discipline whose goal is to represent explicit formal specifications (axiomatizations) of terms in a domain and relations between them. 
Many research areas --such as Semantic Web,  Knowledge Representation,  Commonsense Representation and Reasoning \cite{mccarthy1989artificial,minsky2007emotion,DaM15}--
have converged on the adoption of formal ontologies as explicit conceptualizations that are able to support automated reasoning. 
In this paper we focus on First-Order Logic (FOL) ontologies. 
Description Logic (DL) \cite{BCM10} is a family of formal knowledge representation languages that is very commonly used in the ontology area, but DL is less expressive than FOL. Ontologies such as e.g. CYC \cite{Matuszek+'06}, \DOLCE{} \cite{GGM03} and \SUMO{} \cite{Niles+Pease'01} which are  very useful in some areas --e.g. commonsense reasoning and natural language processing-- use more expressive languages (FOL or higher). 
Moreover, DL is a sublanguage of FOL and consequently the techniques presented in this paper can also be applied to DL ontologies.\footnote{Indeed, in this paper we report on the application of our techniques to the DL ontology DOLCE-CASL.}

As with other software artifacts, ontologies have to fulfill some previously specified requirements. Clear (and explicit) ontological distinctions and principles such as those provided by OntoClean \cite{GuW04} reduce the risk of classification mistakes in the ontology development process, and can simplify its maintenance. Both the creation of ontologies and the verification of its requirements are usually semi-automatic tasks that require a significant amount of human effort \cite{ALR12}. Once the ontology is stable and consistent (or at least no inconsistencies can be found by automatic means), the methods for ontology testing can be classified into two main categories: {\it black-box} and {\it white-box}, as in the field of software testing \cite{MSB12}. Black-box methods are based on the use of tests defined according to the requirements of the given ontology, while white-box methods are characterized by the fact that tests are created upon the particular specification/codification of the knowledge. 
In this paper, every ontology to be tested is considered to be the specification or codification, and a set of tests is automatically constructed from the axioms of the considered ontology. The construction of each test depends only on the syntactical form of an axiom. In this sense, our strategy is white-box.
As far as we know, the only testing method (for FOL ontologies) that can be classified as white-box is proposed in \cite{SSU17}, where the authors propose to create potentially unsatisfiable subsets of axioms by applying SInE strategies \cite{HoV11} on the basis of a selected {\it seed} symbol. In this way, large first-order knowledge bases can be proved to be inconsistent if some inconsistent subset is found. Among black-box testing methods, most frequent ones are based on consistency checking.
State-of-the-art reasoners such as FaCT++ \cite{TsH06}, Pellet \cite{SPC07} or HermiT \cite{GHM14} enable consistency proving in the case of DL ontologies. On the contrary, proving the consistency of ontologies expressed in FOL such as \SUMO{}  or \DOLCE{}  is much harder. Additionally, ontology testing methods include those based on the use of {\it competency questions} (CQs) \cite{GrF95} for validating functional requirements. That is, the {\it competency} of an ontology is described by means of a set of goals or problems that are expected to be answers according to its requirements. The process of obtaining CQs is not automatic but creative \cite{FGS13}. Depending on the size and complexity of the ontology, creating a suitable set of CQs is by itself a very challenging and costly task. In \cite{ALR15,ALR17}, we propose a method for the semi-automatic creation of CQs for \SUMO{}-based FOL ontologies on the basis of \WORDNET{} \cite{Fellbaum'98} and its mapping into \SUMO{} \cite{Niles+Pease'03}. Alternatively, in \cite{BFS13} the authors propose a tool that creates and processes CQs written in natural language for OWL ontologies. Finally, some other testing methods are based on cross-checking ontologies against another knowledge bases. For example, the authors of \cite{PaG15} propose to find errors in \DBPEDIA{} \cite{auer2007,bizer2009} by its alignment to \DOLCE{}, and contextual knowledge extracted from \DBPEDIA{} is used for detecting hidden errors in DL ontologies as described in \cite{TZN18}.

Regarding ontology debugging methods, the classification is slightly different \cite{PSK05}: {\it black-box} methods use reasoners as the oracle for a certain set of questions e.g., subsumption, satisfiability, etc.; {\it glass-box} methods are based on information extracted from the internals of reasoners, which are sometimes specifically adapted for the debugging task. There exists a large variety of techniques which are used in both classes of methods for DL ontologies. Among others, justification based techniques in black-box methods \cite{FrS05,JQH09,Hor11,SFF12}, axiom pinpointing in black-box \cite{BaS08} and glass-box methods \cite{BaP10}, and checking unsatisfiable dependant paths in glass-box methods \cite{ZOY17}. Further, black- and glass-box methods are combined in some proposals, using both axiom pinpointing \cite{SHC07} and justification based techniques \cite{KPH07}.

In \cite{GrF95}, the authors propose a methodology for the design and evaluation of ontologies on the basis of a set of CQs. The only requirement for applying the proposed methodology is the existence of a decision algorithm for the underlying logic. An adaptation of this methodology to FOL ontologies is introduced in \cite{ALR15}, which enables ontologies to be automatically evaluated by means of the use of FOL Automated Theorem Provers (ATPs). In this adaptation, the set of CQs is partitioned into two sets: {\it truth-tests} and {\it falsity-tests}, depending on whether one expects the conjecture to be entailed by the ontology ({\it truth-tests}) or not ({\it falsity-tests}). 
An example of truth-test is {\it ``Siblings have the same mother''}, since it is expected to be entailed by an ontology (Of course, the ontology should axiomatize the involved concepts).
This truth-test belongs to the {\it Commonsense Reasoning} CSR domain of the {\it Thousands of Problems for Theorem Provers} (TPTP) problem library\footnote{\url{http://www.tptp.org}} \cite{Sut09} and the latter one is a CQ in the benchmark proposed in \cite{ALR15}.
On the contrary, the conjecture {\it ``Some herbivores eat animals''} is a falsity-test since it is not expected to be entailed by the ontology, in spite that the ontology should axiomatize the involved concepts.

In this paper, we introduce a completely automatic methodology for the evaluation of FOL ontologies by means of a set of automatically generated falsity-tests and truth-tests.
We define the logical foundations of these sets of tests and prove the correctness of the methodology. 
We also describe the application of our methodology to \DOLCE{} \cite{Gangemi+'02}, \KEPLER{} (formal proof of the Kepler conjecture) \cite{HHM10} and \ADIMENSUMO{} \cite{ALR12}. In particular, we review in detail the kind of defects that have been detected in those ontologies.
An improved version of \ADIMENSUMO{} v2.6 has been obtained by correcting all the defects detected following the introduced methodology. 
An example of incorrect axiom that we have detect in \ADIMENSUMO{} v2.4 following our methodology is part of the axiomatization of the relation {\it sibling}. The wrong axiom asserts that {\it ``Any two members of the same broad are siblings"}, instead of ``Any two {\it different} members of the same broad are siblings" (see Subsection \ref{subsec:incorrect-axioms} for details). The latter is the correct version of the axiom in \ADIMENSUMO{} v2.6.

The paper is organized as follows. First, we briefly describe the ontologies \DOLCE{}, \KEPLER{} and \ADIMENSUMO{} in Section \ref{section:FOL-Ontologies}.
In Section \ref{section:methodology}, we introduce the proposed methodology for evaluating ontologies utilizing ATPs. Next, in Section \ref{section:tests}, we formally define the set of tests that are proposed for a given axiom. Then, in Section \ref{section:example}, we provide a detailed example illustrating the calculation of tests. In Section \ref{section:correctness}, we prove the correctness of the proposed set of tests. In Section \ref{section:defects}, we report on the main kinds of defects that we have found in the evaluated ontologies, explaining specific examples of four different types.
We provide a summary of our experimental results in Section \ref{section:experimentation}. Finally, we give some conclusions and discuss future work.

\section{FOL Ontologies} \label{section:FOL-Ontologies}

FOL formulas are constructed on an alphabet (or signature) of function and predicate symbols, using the logical connectives of negation ($\neg$), conjunction ($\wedge$), disjunction ($\vee$), implication ($\rightarrow$) and double-implication ($\leftrightarrow$) , as well as the universal and existential quantifiers (resp. $\forall$ and $\exists$).
The notation $\quantifier{ x}$ stands for a sequence of quantifiers (on variables) $Q_1  x_1 \ldots Q_n  x_n$ such that $n \geq 0$ and $Q_i \in \{ \exists, \forall \}$ for each $1 \leq i \leq n$. 
In the FOL formulas in this paper, functions symbols (in particular, constants) start with a capital letter, whereas predicates start with a lower-case and variables are lower-case letters (possibly with subindices).
We use {\it sentence} to refer to any FOL formula for which all its variable occurrences are in the scope of (or bound by) a quantifier. We assume that the reader has some familiarity with the syntax and basic notions of FOL. 

FOL ontologies consist of a set of FOL sentences called {\it axioms}. Typically, ontology axioms are classified into {\it rules} and {\it non-rules}. Rules are universally closed implications. An example of rule axiom in \DOLCE{} is (\ref{formula:NonEmptyUniversalKIFDolce}).
It is also common to call the left-hand part of the implication {\it antecedent}, and the right-hand part {\it consequent}.
In this section, we introduce the main features of the FOL ontologies that have been used for the evaluation of our white-box testing strategy ---\DOLCE{}, \KEPLER{} and \ADIMENSUMO{}--- and provide some figures about their content.
We also provide some examples of axioms and tests in the form of FOL sentences. Each FOL ontology uses its own alphabet of functions and predicates, named with strings (e.g. $world$, $mother$, $agent$, ...) that usually try to express its meaning,
but their formal meaning is given by their FOL axiomatization in the ontology.

\DOLCE{} ({\it Descriptive Ontology for Linguistic and Cognitive Engineering}) is the first-module of a {\it Library of Foundational Ontologies} being developed within the WonderWeb project \cite{Gangemi+'02}. Its main purpose is supporting effective cooperation between multiple artificial agents and establishing consensus in a mixed society where artificial agents cooperate with human beings. The partial mappings from \WORDNET{} \cite{Fellbaum'98} to \DOLCE{} \cite{GGM03} enable the connection of \DOLCE{} to other semantic resources such as the Multilingual Central Repository (MCR) \cite{ARV04} and its application to advanced Natural Language Processing, Knowledge Engineering and Semantic Web tasks \cite{VAR13}. The domain of discourse of \DOLCE{} is restricted to the notion of {\it particulars} ---entities which have no instances. Similarly, particulars are characterized and organized around a taxonomy of 37 {\it universals}---entities that can have instances--- and universals are organized around the notion of {\it world}. However, no particular and no world is explicitly defined. \DOLCE{} is originally expressed in KIF  according to the standard proposed in \cite{KIF98}\footnote{\url{http://logic.stanford.edu/kif/dpans.html}} and simplified translations into various logical languages have been proposed (\DOLCE{}-Lite-Plus\footnote{\url{http://www.loa.istc.cnr.it/old/DOLCE.html}}). The KIF version of \DOLCE{} uses row variables ---which produces variable-arity relations--- and quantified predicate symbols. Hence, we have applied the translation described in \cite{ALR12} for its transformation into a pure FOL formula (from now on, \KIFDOLCE{}). As result, we have obtained 257 rule-axioms such as the following:
\begin{equation} \label{formula:NonEmptyUniversalKIFDolce}
\forall w \; \forall f \; ( \; ( universal(f) \wedge world(w) ) \to nep(w,f) \; )
\end{equation}
where \textPredicate{nep}$(w,f)$ stands for the non-emptiness of the universal $f$ in the world $w$. Vampire v4.1 \cite{RiV02,KoV13} proves that \KIFDOLCE{} is consistent. In addition, a simplified translation of \DOLCE{} into CASL \cite{ABK02} (from now on, \CASLDOLCE{}) is available in Hets \cite{MML07} and its consistency is proved in \cite{KuM11}. This translation consists of 416 non-atomic formulas such as
\begin{equation} \label{formula:NonEmptyEndurantCASLDolce}
\exists y \; ( \; pED(y) \; )
\end{equation}
where \textPredicate{pED} is used to state that the universal \textPredicate{Endurant} has some particular.\footnote{37 formulas like (\ref{formula:NonEmptyEndurantCASLDolce}) are used in \CASLDOLCE{} for stating the property in formula (\ref{formula:NonEmptyUniversalKIFDolce}).} Both FOL versions of \DOLCE{} ---\KIFDOLCE{} and \CASLDOLCE{}--- were expected to be non-defective due to their reduced size and their mature state of development.

\KEPLER{} ({\it formal proof of the Kepler conjecture}) is an ontology that has been derived from the Flyspeck project \cite{HHM10} for its use in the {\it CADE ATP System Competition CASC-J8} \cite{Sut16}. The purpose of the Flyspeck project is to give a formal proof of the Kepler conjecture, which asserts that no packing of congruent balls in three-dimensional Euclidean space has a density greater than that of the face-centered cubic packing \cite{HaF06}. Its participants claim that it ``is the most complex formal proof ever undertaken'' and estimate that it may take about twenty working-years to complete the formalization. To this end, every logical inference is checked against the foundational axioms of mathematics with the help of a computer and, no matter how trivial they are, no step is skipped. We have used the version of \KEPLER{} that was provided for the CASC competition, since it already consists in a pure FOL axiomatization with 78,500 axioms, such as the following:
\begin{displaymath}
\forall a \; \forall x \; \forall y \; ( \; s(a,x) = s(a,y) \; \to \; s(a,y) = s(a,x) \; )
\end{displaymath}
Though \KEPLER{} is much larger than \DOLCE{}, \KEPLER{} has also reached a very mature state of development. Consequently, we did not expect to discover many defects in \KEPLER{} either.

Finally, \ADIMENSUMO{} has been derived from \SUMO{} ({\it Suggested Upper Merged Ontology})\footnote{\url{http://www.ontologyportal.org}} \cite{Niles+Pease'01}, which was promoted by a group of engineers from the IEEE Standard Upper Ontology Working Group as a formal ontology standard during the nineties of the past century. Their goal was to develop a standard upper ontology to promote data interoperability, information search and retrieval, automated inference and natural language processing. \SUMO{} is expressed in SUO-KIF ({\it Standard Upper Ontology Knowledge Interchange Format} \cite{Pea09}), which is a dialect of KIF, and its syntax goes beyond FOL. Consequently, \SUMO{} cannot be directly used by FOL ATPs without a suitable transformation.
Furthermore, in order to support higher-order aspects, a translation of \SUMO{} is also required for its use by means of pure higher-order theorem provers \cite{PeB13}.
In \cite{ALR12}, the authors use ATPs for reengineering around 88\% of the top and the middle levels of \SUMO{} into \ADIMENSUMO{},\footnote{The first version of \ADIMENSUMO{} is v2.2.} which can be expressed as a FOL formula. This translation is based on a small set of meta-predicates ---and its axiomatization--- that are required to define the knowledge of \SUMO{} according to its organization around four kinds of concepts: {\it objects}, {\it classes}, {\it relations} and {\it properties}. Some of these meta-predicates are \textPredicate{\$instance}, \textPredicate{\$subclass}, \textPredicate{\$disjoint} and \textPredicate{\$partition}.\footnote{In \ADIMENSUMO{}, meta-predicates names are marked with a $\$$. In this paper, we omit these marks since ATPs deal with them like any other predicate symbol in the alphabet.}  
In \ADIMENSUMO{}, like in other ontologies expressed in KIF (e.g. \SUMO{} or \KIFDOLCE{}), $instance$ is used to assert that an object is in a class. For example, the atom $instance(h,Herbivore)$, instead of $Herbivore(h)$, is used to express that object $h$ is in the class $Herbivore$.
In addition, \cite{ALR12} provides a suitable translation of domain (or type) information of relations which, in \SUMO{}, is (separately) provided by means of \textPredicate{domain} axioms. For example, in \ADIMENSUMO{}, there are four non-rule axioms asserting that the first argument of predicate \textPredicate{\$instance} is an object, whereas the second one is a class, and that both arguments of \textPredicate{\$subclass} are classes. \ADIMENSUMO{} has also three rules for axiomatizing \textPredicate{\$subclass} as a partial order, i.e. each rule respectively says that \textPredicate{\$subclass} is reflexive, antisymmetric and transitive. Additionally, the following rule axiomatizes \textPredicate{\$instance} in terms of \textPredicate{\$subclass}:
\begin{displaymath}
\forall x \; \forall y \; \forall z \; ( \; (\;\$instance(x,y)\;\wedge\; \$subclass(y,z)\;) \; \to \; \$instance(x,z) \; ).
\end{displaymath}
Many other meta-predicates are defined in terms of \textPredicate{\$subclass} and \textPredicate{\$instance}. For example, the predicate \textPredicate{\$disjoint} is defined by the following rule in  \ADIMENSUMO{}:
\begin{displaymath}
\forall x \; \forall y \; ( \; \$disjoint(x,y)\; \leftrightarrow \;\forall z\; (\;\neg\$instance(z,x)\;\vee\; \neg\$instance(z,y)\;) \; ).
\end{displaymath}
The interested reader is referred to \cite{ALR12} for  a detailed description of the axiomatization and the translation.

\ADIMENSUMO{} ---like \DOLCE{}--- also uses row variables and quantified predicate symbols. In \cite{ALR15}, we introduce an evolved version of \ADIMENSUMO{} (namely, v2.4) and demonstrate its inference capabilities in practice. More specificaly, we exploit the whole mapping of \WORDNET{} to \SUMO{} \cite{Niles+Pease'03} in order to obtain a set of CQs by following a black-box testing strategy. As reported in \cite{ALR15}, we have experimentally tested \ADIMENSUMO{} v2.4 using the resulting set of CQs and no defect has been detected. Additionally, we have used \ADIMENSUMO{} v2.4 and the same set of CQs for an experimental comparison of several FOL ATPs in \cite{ALR16}. The state of development of \ADIMENSUMO{} v2.4 was not mature and we were able to detect various defects by following the white-box testing methodology introduced in this paper. As result of correcting all the detected defects, we obtained \ADIMENSUMO{} v2.6, which has been already used in the experimentation reported in \cite{AGR18,AlR18}.

\begin{table}[h]
\caption{ Some figures about the evaluated ontologies}
\label{table:FOLOntologiesFigures}
\centering
\begin{tabular}{lrrr}
\hline \\[-4mm]
{Ontology} & {Non-rules} & {Rules} & {Total} \\[1pt]
\hline \\[-3mm]
{\KIFDOLCE{}} & 0 & 257 & 257 \\
{\CASLDOLCE{}} & 0 & 416 & 416 \\
{\KEPLER{}} & 275 & 78,225 & 78,500 \\
{\ADIMENSUMO{} v2.4} & 4,635 & 2,785 & 7,420 \\
{\ADIMENSUMO{} v2.6} & 4,638 & 2,799 & 7,432 \\
\hline
\end{tabular}
\end{table}

In Table \ref{table:FOLOntologiesFigures}, we summarize some figures about \DOLCE{}, \KEPLER{} and \ADIMENSUMO{} (v2.4 and v2.6): the number of (non-rule and  rule) axioms that result from their transformation into a pure FOL formula (no transformation is required for \KEPLER{}).

\section{Automatic Testing of FOL Ontologies} \label{section:methodology}

In this section, we describe the framework and methodology proposed in \cite{ALR15} for the evaluation of FOL ontologies using an existing set of conjectures consisting of falsity-tests and truth-tests. 

A conjecture is decided to be entailed by the ontology only if the ATP is able to find a proof within the provided execution-time limit. The proof (which can be reported by the ATP) of a conjecture that is not-expected to be proved (i.e. a falsity-test) provides hints on the defects of the axiomatization. 
In general, when a proof is not found, the ATP can report, either that the conjecture is not entailed, or that the time limit was reached. In the first case, the ATP could produce a countermodel showing that the conjecture is not satisfied in a specific model of the ontology. Countermodels of truth-tests --similarly proofs of falsity-tests-- could be used as hints for detecting defects in the ontology. For large ontologies, axiom selection methods \cite{HoV11} can be used to increase the number of useful answers (countermodels or proofs), i.e. to reduce the number of tests that exceeds the time limit. 
However, this approach is beyond the scope of the methodology presented in this paper. In fact, this could be a future improvement of our framework. 
Consequently, if ATPs find a proof, then {\it truth-tests} and {\it falsity-tests} are classified as {\it proved}. Otherwise, if no proof is found, then we classify both {\it truth-} and {\it falsity-tests} as {\it unknown} because we do not know whether the corresponding conjectures are entailed or not. 
For example, the truth-test {\it ``Siblings have the same mother''} is given by the following FOL sentence:
\begin{eqnarray} \label{formula:SiblingsMother}
\forall o1 \; \forall o2 \; \forall o3 \; ( \; ( mother(o1,o2) \wedge sibling(o1,o3)\; ) \to mother(o3,o2) \; ) 
\end{eqnarray} 
and it is easily proved by ATPs to be entailed by \ADIMENSUMO{} v2.6. Thus, the truth-test (\ref{formula:SiblingsMother}) is classified as {\it proved}.
On the contrary, for the falsity-test {\it ``Some herbivores eat animals''}, that is given by the FOL sentence:
\begin{eqnarray} \label{formula:Herbivore}
\exists h \; \exists a \; \exists e \; (& instance(h,Herbivore) \wedge instance(a,Animal)  \hspace{1,3cm} & \\ \nonumber
&  \wedge \;instance(e,Eating)  \wedge agent(e,h) \wedge patient(e,a)  \; ) &
\end{eqnarray}
is classified as {\it unknown} in \ADIMENSUMO{} v2.6.

It is worth noting that truth-tests classified as proved will be used to grade the suitability of the axiom they come from, whereas falsity-test classified as proved  will be used to detect defects in the axiomatization usually with the help of the proof reported by the ATP.
As a consequence, both,
the grade of suitability of a formula, and the detection on defects, rely on the ATPs utilized and also on the parameter configuration set. 

In our proposal, the set of conjectures is automatically constructed by following white-box testing strategies. We create two sets\footnote{They are formally defined in Definitions \ref{defn:FalsityTests} and \ref{defn:TruthTests}, respectively.} of conjectures: $FT(\phi)$ and $TT(\phi)$, for each axiom $\phi$ in the tested ontology, and this construction depends only on the syntactical form of the axiom $\phi$.  The purpose of  $FT(\phi)$ is to detect {\it defects} in the axiomatization. Each conjecture $\alpha$ in $FT(\phi)$ is called a {\it falsity-test} because $\alpha$ is not expected to be entailed by the ontology. If some $\alpha\in FT(\phi)$  is inferred from the ontology, then $\phi$ contains some {\em redundant} subformula. For example, we detect that the following axiom, extracted from \ADIMENSUMO{} \cite{ALR12}, is defective:
\begin{equation} \label{formula:CenterOfCircle}
\forall c \; ( \; instance(c,Circle) \; \to \; \exists p \; ( CenterOfCircleFn(c) = p ) \; )
\end{equation}
by means of the falsity-test: 
\begin{equation} \label{goal:CenterOfCircle}
\forall c \; \exists p \; ( \; CenterOfCircleFn(c) = p \; )
\end{equation}
Conjecture (\ref{goal:CenterOfCircle}) is trivially proved since equality is reflexive. 
In other words, from the reflexivity axiom: 
\begin{equation*} \label{axiom:reflexivity}
\forall x \;  ( \; x = x \; )
\end{equation*}
it is easy to prove that
\begin{equation} \label{goal:CenterOfCircle2}
\forall c \; ( \; CenterOfCircleFn(c) = CenterOfCircleFn(c) \; )
\end{equation}
and then (\ref{goal:CenterOfCircle}) is easily entailed from (\ref{goal:CenterOfCircle2}).
We conclude that the antecedent (left-hand part of the implication) of axiom (\ref{formula:CenterOfCircle}) is redundant, i.e. the consequent (right-hand part of the implication) is entailed whatever its antecedent would be. That makes axiom (\ref{formula:CenterOfCircle}) to be redundant itself, since by removing the redundant antecedent in (\ref{formula:CenterOfCircle}), we get (\ref{goal:CenterOfCircle}) and, moreover, the latter is already entailed by the ontology. \\
Our notion of redundancy is related to its practical use in the reasoning process, and it is formally introduced in Definition \ref{defn:redundancy1}. However, sometimes redundancy is caused by a different kind of defect ranging from typos (e.g. a misspelled or misplaced variable symbol) to incorrect axioms (e.g. a necessary condition is not required). In particular, our proposal enables defects to be detected that we have classified in the following four classes: {\it typos}, {\it redundant axioms}, {\it redundant subformulas (in axioms)} and {\it incorrect (inaccurate) axioms}.

The set $TT(\phi)$ consists of the negation of all the conjectures in $FT(\phi)$. They are called {\em truth-tests} because they are expected to be inferred and they are used to  grade the {\it suitability} of axioms for reasoning purposes. 
That is, whenever no conjecture in $FT(\phi)$ is classified as proved for the axiom $\phi$, we identify three different grades of suitability according to the conjectures in $TT(\phi)$ that are entailed by the ontology. Axiom $\phi$ is {\it completely suitable} if the ontology entails all the truth-tests in $TT(\phi)$. Otherwise $\phi$ is {\it partially suitable} if at least one conjecture in $TT(\phi)$ is proved; and $\phi$ is {\it unsuitable} if no one conjecture in $TT(\phi)$ is proved.
For example, the following axiom obtained from \ADIMENSUMO{}:
\begin{equation} \label{formula:DrivingPatient}
\forall d \; ( \; instance(d,Driving) \; \to \\
 \exists v \; ( \; instance(v,Vehicle) \wedge patient(d,v ) \; ) \; )
\end{equation}
is classified as completely suitable by means of an automatically generated set of eight different truth-tests used as conjectures by the ATPs.\\

\section{Automatic Generation of Tests} \label{section:tests}

In this section, we introduce the definition of  two functions $FT$ and $TT$ that respectively compute the sets of falsity- and truth-tests for a given axiom. These two functions are defined by (structural) induction on the syntactic structure of the input axiom, hence our automatic test generation is syntax-based.

We use lower-case Greek letters for arbitrary formulas and capital Greek letters (e.g. $\Phi$) for (finite) sets of sentences, that equivalently can be seen as conjunctions of all their members. The notation $\phi[\alpha]$ represents the formula $\phi$ and, at the same time, means that $\alpha$ is a subformula of $\phi$. We denote by $\phi[\alpha/\gamma]$ the formula that results from replacing every occurrence of $\alpha$ (as a subformula of $\phi$) with $\gamma$.
Given any FOL formula $\phi$, the expressions $( \phi )^\exists$ and $( \phi )^\forall$ respectively, denote the existential and universal closure of $\phi$. Note that if $\phi$ is a sentence, then $( \phi )^\exists$ and $( \phi )^\forall$ are identical to $\phi$.  Two formulas $\phi$ and $\psi$ are semantically (or logically) {\it equivalent}, in symbols $\phi \equiv \psi$, if and only if they have exactly the same models. Given any set of sentences $\Phi$, we say that two formulas $\alpha$ and $\beta$ are {\it $\Phi$-equivalent} if and only if $\Phi\models( \alpha \leftrightarrow \beta )^\forall$ (i.e. every model of $\Phi$ is also a model of $( \alpha \leftrightarrow \beta )^\forall$). Note that if $\alpha$ and $\beta$ are sentences, then $(\alpha \leftrightarrow\beta)^\forall$ and $\alpha \leftrightarrow\beta$ are the same formula.

Most of the axioms in an ontology are (universally closed) implications, e.g. see axioms (\ref{formula:CenterOfCircle}) and (\ref{formula:DrivingPatient}) above.  
Redundancy in the subformulas of axioms can be detected by proving unexpected conjectures.
For example, an axiom 
$( \gamma \to \psi )^\forall$ is redundant in an ontology $\Phi$, whenever 
$\Phi$ entails one of the conjectures $\psi^\forall$ or $(\neg\gamma)^\forall$. However, if we consider a formula $(\gamma \to (\phi\vee\psi))^\forall$ such that $\Phi$ entails the conjecture $(\neg\psi)^\forall$, then the subformula $\psi$ is redundant in the axiom, since  $(\gamma \to (\phi\vee\psi))^\forall$ is $\Phi$-equivalent to $(\gamma \to \phi)^\forall$.
Implication is an important connective in our test generation.
However, for the sake of a more uniform treatment and a clearer presentation, we use the equivalence $\psi\rightarrow\gamma\equiv\neg\psi\vee\gamma$ to transform implications into disjunctions. In this sense, a subformula $\alpha \vee \beta$ can be seen as the result of transforming (and simplifying) the implication $\neg\alpha \rightarrow \beta$ or the implication $\neg\beta \rightarrow \alpha$.
Consequently, given a formula $\phi[\alpha \vee \beta]$ (where the subformula $\alpha \vee \beta$ is not in the scope of  negation)\footnote{In the formal definition, $\phi$ is in negation normal form (see explanations just above Definition \ref{defn:FalsityTests}).}, we propose the following falsity-tests for the detection of defects by searching redundant subformulas:  $\alpha^\forall$, $\beta^\forall$, $( \neg \alpha )^\forall$ and $( \neg \beta )^\forall$. Roughly speaking, when  $\alpha^\forall$ is entailed by $\Phi$, we detect that $\beta$ is redundant. Symmetrically, the entailment of  $( \beta )^\forall$ makes $\alpha$ redundant. Whenever $( \neg \alpha )^\forall$ or $( \neg \beta )^\forall$ are entailed, then $\alpha$ or $\beta$ respectively, are redundant.

Next, we illustrate the idea for generating falsity-tests by means of an example.

\begin{example} \label{ex:sibling-FT}
Consider the following set of four axioms of \SUMO:
\begin{equation}
\label{axiom:siblingIrreflexiveRelation} 
instance(sibling, IrreflexiveRelation)
\end{equation}
\begin{equation}
\label{axiom:domainSibling1}
domain(sibling,1,Organism)
\end{equation}
\begin{equation}
\label{axiom:domainSibling2} 
domain(sibling,2,Organism)
\end{equation}
\begin{eqnarray}
 \nonumber
\forall m_1\;\forall m_2\; \forall b\; 
(\; 
\hspace{-5mm} & ( & \hspace{-5mm} instance(b,Brood) \wedge 
member(m_1,b) \wedge
member(m_2,b)
) \\
& \;\rightarrow \; & \label{axiom:BroodSibling}
sibling(m_1,m_2)       
\;)
\end{eqnarray}
According to the type information in axioms (\ref{axiom:domainSibling1}-\ref{axiom:domainSibling2}), both arguments of \textPredicate{sibling} are restricted to be instance of \textConstant{Organism}.  Consequently, by translation (see \cite{ALR12} for more details), axiom (\ref{axiom:BroodSibling}) gives raise to the following rule-axiom in \ADIMENSUMO{} v2.4:  
\begin{equation} \label{formula:oneSortedBroodSibling}
\begin{array}{rll}
\forall  m_1 \; \forall  m_2 \; \forall  b \;( \hspace{-7pt} &    ( \hspace{-10pt} &  instance( m_1,Organism) \; \wedge \\
 & &  instance( m_2,Organism) \; \wedge \\
 & & instance( b,Brood) \; \wedge \\
 & & member( m_1, b) \; \wedge \\
 & & member( m_2, b)  \; ) \; \rightarrow\\
 & \multicolumn{2}{l}{sibling( m_1, m_2) \; )}
\end{array}
\end{equation}
Since its inner subformula is logically equivalent to the following disjunction (in negation normal form):
\begin{equation} \label{subformula:oneSortedBroodSiblingDisjunction}
\begin{array}{rl}
( \;\hspace{-10pt} & 
 \neg instance( m_1,Organism) \; \vee\\
 & \neg instance( m_2,Organism) \; \vee \\
 & \neg instance( b,Brood) \; \vee\\
 & \neg member( m_1, b) \; \vee\\
 & \neg member( m_2, b) \; )
\end{array}
\begin{array}{c}
~ \\
~ \\
~ \\
~ \\
\vee
\end{array}
\begin{array}{l}
~ \\
~ \\
~ \\
~ \\
sibling( m_1, m_2)
\end{array}
\end{equation}
The following four conjectures belong to the set of falsity-tests for axiom (\ref{formula:oneSortedBroodSibling}):

\begin{equation}
 \label{goal:oneSortedBroodSiblingDisjunctionAlphaFalsityTest}
\begin{array}{rl}
\forall  m_1 \; \forall  m_2  \; \forall  b \; ( \hspace{-10pt} & \neg  instance( m_1,Organism) \; \vee \\
 & \neg instance( m_2,Organism) \; \vee \\
 & \neg instance( b,Brood) \; \vee \\
 & \neg member( m_1, b) \; \vee \; \neg member( m_2, b) )
\end{array}
\end{equation} 
\begin{equation} \label{goal:oneSortedBroodSiblingDisjunctionBetaFalsityTest}
\forall  m_1 \; \forall  m_2 \; ( sibling( m_1, m_2) ) \end{equation} 
\begin{equation} \label{goal:oneSortedBroodSiblingDisjunctionNegAlphaFalsityTest}
\begin{array}{rl}
\forall  m_1 \; \forall  m_2  \; \forall  b \; ( \hspace{-10pt} &  instance( m_1,Organism) \; \wedge \\
 &  instance( m_2,Organism) \; \wedge \\
 &  instance( b,Brood) \; \wedge \\
 &  member( m_1, b) \; \wedge \;  member( m_2, b) )
\end{array}
\end{equation} 
\begin{equation} \label{goal:oneSortedBroodSiblingDisjunctionNegBetaFalsityTest}
\forall  m_1 \; \forall  m_2 \; ( \neg \; sibling( m_1, m_2) )
\end{equation}
Using the methodology described in Section \ref{section:methodology}, falsity-test (\ref{goal:oneSortedBroodSiblingDisjunctionAlphaFalsityTest}) is classified as proved, while the remaining falsity-tests  (\ref{goal:oneSortedBroodSiblingDisjunctionBetaFalsityTest}-\ref{goal:oneSortedBroodSiblingDisjunctionNegBetaFalsityTest}) are classified as unknown. Thus, falsity-test (\ref{goal:oneSortedBroodSiblingDisjunctionAlphaFalsityTest}) enables a defect to be detected as explained in Subsection \ref{subsec:incorrect-axioms}. In fact, axiom (\ref{axiom:BroodSibling}) has been corrected in \ADIMENSUMO{} v2.6.
\end{example}

The function $FT$ is to be applied to every ontology axiom. Therefore, $FT$ is defined by structural induction on a formula $\phi$ that belongs to the language
\begin{displaymath}
\phi ::= \ell \mid \phi \vee \phi \mid \phi \wedge \phi \mid \forall x \phi \mid \exists x \phi
\end{displaymath}
where $\ell$ stands for literal (atom or negated atom). In addition, we can suppose that any quantifier in $\phi$ has a different variable symbol. Our assumption is not a limitation since it is well-known that any FOL formula can be transformed into a logically equivalent one in the above language (see e.g. \cite{Fit90}). The transformation follows the three initial steps of the standard algorithm that transforms any FOL formula into its conjunctive normal form:
\begin{enumerate}
\item Rectification or elimination of variable clashing: rename clashing variables so that each quantifier has a unique variable symbol.
\item Transformation into {\it arrow-free form}: repeatedly apply the following two logical equivalences left-to-right until none can be applied: 
\begin{itemize}
\item $\psi \leftrightarrow \gamma \equiv ( \psi\to \gamma ) \wedge ( \gamma \to \psi )$
\item $\psi \to \gamma \equiv ( \neg \psi ) \vee \gamma$
\end{itemize}
\item Transformation into {\it negation normal form}: repeatedly apply the following five logical equivalences (left-to-right) until none can be applied: 
\begin{itemize}
\item $\neg \neg \psi \equiv \psi$ 
\item $\neg ( \psi \wedge \gamma ) \equiv ( \neg \psi ) \vee ( \neg \gamma )$
\item $\neg ( \psi \vee \gamma ) \equiv ( \neg \psi ) \wedge ( \neg \gamma )$
\item $\neg \forall  x \; \psi \equiv \exists  x \; \neg \psi$
\item $\neg \exists  x \; \psi \equiv \forall  x \; \neg \psi$
\end{itemize}
\end{enumerate}
In what follows, we say that the formulas in the above language are in {\it arrow-free and negation normal form} (in abbreviated form,  {\it af-nnf}). For example, formula (\ref{subformula:oneSortedBroodSiblingDisjunction}) in Example \ref{ex:sibling-FT} is in af-nnf.

\begin{definition} \label{defn:FalsityTests}
For any af-nnf formula $\phi$, the function $FT$ is recursively defined as
\begin{displaymath}
\begin{array}{rcl}
\FT{\phi} & = &
\begin{cases}
\emptyset & \mbox{if } \phi \mbox{ is a literal} \\[-2mm]
\FTP{\phi} & \mbox{if } \phi = \alpha \vee \beta \\[-2mm]
\FT{\alpha} \; \cup 
 \FT{\beta} & \mbox{if } \phi = \alpha \wedge \beta \\[-2mm]
\FT{\alpha} & \mbox{if } \phi = \forall  x \; \alpha \mbox{ or } \phi = \exists  x \; \alpha
\end{cases}
\end{array}
\end{displaymath}
where the function $FT_0$ is defined as follows:
\begin{displaymath}
\begin{array}{rcl}
\FTP{\phi} & = &
\begin{cases}
\emptyset & \mbox{if } \phi \mbox{ is a literal} \\[-2mm]
\{ \; ( \alpha )^\forall, \; ( \beta )^\forall, \; ( \neg \alpha )^\forall, \; ( \neg \beta )^\forall \; \} \;\\[-4mm]
\hspace{25pt}  \cup \;\FTP{\alpha} \; \cup \; \FTP{\beta} & \mbox{if } \phi = \alpha \vee \beta \mbox{ or } \phi = \alpha \wedge \beta \\[-2mm]
\{ \; ( \alpha )^\forall, \; ( \neg \alpha )^\forall \; \} \; \cup \; \FTP{\alpha} & \mbox{if } \phi = \forall  x \; \alpha \mbox{ or }  \phi = \exists  x \; \alpha
\end{cases}
\end{array}
\end{displaymath}
\end{definition}

The idea behind Definition \ref{defn:FalsityTests} can be summed up as follows: for any falsity-tests $\{ ( \delta )^\forall, \; ( \neg \delta )^\forall \} \subseteq \FT{\phi}$, the sentence $\phi$ is logically equivalent to a (possibly quantified) conjunction of two formulas such that the first one is equal to a disjunction with $\quantifier{ y} \; \delta$ as subformula for some prefix of quantifiers $\quantifier{ y}$. 
Lemma \ref{lemma:subformula-FT} (in Section \ref{section:correctness}) formally states this idea and it is the key result for proving the correctness of our method (see Theorem \ref{thm:FT-correctness} in Section \ref{section:correctness}).

\begin{remark} \label{remark: n-ary}
For the sake of simplicity, in Definition \ref{defn:FalsityTests} we consider the binary connectives of
conjunction and disjunction, however we have implemented its natural generalization to n-ary connectives. For example, for $\phi = \alpha\wedge\beta\wedge\gamma$:
$$
\FTP{\phi}= \{ \; ( \alpha )^\forall, \; ( \beta )^\forall, \; ( \gamma )^\forall, \; ( \neg \alpha )^\forall, \; ( \neg \beta )^\forall \;  ( \neg \gamma )^\forall  \} \; \cup \FTP{\alpha} \; \cup \; \FTP{\beta} \cup \; \FTP{\gamma}.
$$
\end{remark}

It is obvious that the function $FT$ could produce many repeated tests. For example, for $\phi = \forall  x \; ( ( \exists  y \; \alpha ) \vee ( \forall  z \; \beta ) )$ we have that
\begin{eqnarray*}
\FT{\phi} & = & \FT{\forall  x \; ( ( \exists  y \; \alpha ) \vee ( \forall  z \; \beta ) )} \\
 & \stackrel{\forall}{=} & \FTP{( \exists  y \; \alpha ) \vee ( \forall  z \; \beta )} \\
 & \stackrel{\vee}{=} & \{ \; ( \exists  y \; \alpha )^\forall, ( \forall  z \; \beta )^\forall, ( \neg \exists  y \; \alpha )^\forall, ( \neg \forall  z \; \beta )^\forall \; \} \cup \\
 & & \hspace{25pt} \FTP{\exists  y \; \alpha} \cup \FTP{\forall  z \; \beta} \\
 & \stackrel{\exists,\forall}{=} & \{ \; ( \exists  y \; \alpha )^\forall, ( \forall  z \; \beta )^\forall, \; ( \neg \exists  y \; \alpha )^\forall, ( \neg \forall  z \; \beta )^\forall \; \} \cup \\
 & & \hspace{25pt} \{ \; ( \alpha )^\forall, ( \neg \alpha )^\forall \; \} \cup \FTP{\alpha} \cup \\
 & & \hspace{25pt} \{ \; ( \beta )^\forall, ( \neg \beta )^\forall \; \} \cup \FTP{\beta}
\end{eqnarray*}
where $( \forall  z \; \beta )^\forall$ and $( \beta )^\forall$ are both the same formula $\forall  x\;\forall  z \; \beta$. In addition, 
$( \neg \exists  y \; \alpha )^\forall$ and $( \neg \alpha )^\forall$ (using nnf) are both $\forall  x\;\forall  y \; \neg\alpha$. 
Our implementation avoids such repetitions. We provide some figures about the final amount of different falsity- and truth-tests in Section \ref{section:experimentation}. \\

The truth-tests for an axiom $\phi$ are the negations of all the falsity-tests for $\phi$. Hence, in the case of an axiom of the form $\phi[\alpha \vee \beta]$, we generate (among others) the truth-tests $( \neg \alpha )^\exists$, $( \neg \beta )^\exists$, $( \alpha )^\exists$ and $( \beta )^\exists$. Therefore, the function $\TT{\_}$ is simply defined on the basis of $\FT{\_}$ as follows.
\begin{definition} \label{defn:TruthTests}
For any af-nnf formula $\phi$, the function $\TT{\_}$ is defined as:
\begin{displaymath}
\TT{\phi} \; = \; \{ \neg \theta \; | \; \theta \in \FT{\phi} \}
\end{displaymath}
\end{definition}

\begin{example} \label{ex:sibling-TT}
For axiom (\ref{formula:oneSortedBroodSibling}) in Example \ref{ex:sibling-FT}, we obtain the following truth-tests
by negation of falsity-tests (\ref{goal:oneSortedBroodSiblingDisjunctionAlphaFalsityTest}-\ref{goal:oneSortedBroodSiblingDisjunctionNegBetaFalsityTest})
\begin{equation} \label{goal:oneSortedBroodSiblingDisjunctionAlphaTruthTest}
\begin{array}{rl}
\exists  m_1 \; \exists  m_2 \; \exists  b \; ( \hspace{-6pt} &  instance( m_1,Organism) \; \wedge \\
 &  instance( m_2,Organism) \; \wedge \\
 &  instance( b,Brood) \; \wedge \\
 & member( m_1, b) \; \wedge \; member( m_2, b) \; )
\end{array}
\end{equation} 
\begin{equation} \label{goal:oneSortedBroodSiblingDisjunctionBetaTruthTest}
\exists  m_1 \; \exists  m_2 \; ( \; \neg \; sibling( m_1, m_2) \; ) 
\end{equation} 
\begin{equation} \label{goal:oneSortedBroodSiblingDisjunctionNegAlphaTruthTest}
\begin{array}{rl}
\exists  m_1 \; \exists  m_2 \; \exists  b \;  ( \hspace{-6pt} & \neg \; instance( m_1,Organism) \; \vee \\
 & \neg \;  instance( m_2,Organism) \; \vee  \\
 & \neg \; instance( b,Brood) \; \vee  \\
 & \neg \;member( m_1, b) \; \vee  \; \neg \; member( m_2, b) \; )
\end{array}
\end{equation} 
\begin{equation} \label{goal:oneSortedBroodSiblingDisjunctionNegBetaTruthTest}
\exists  m_1 \; \exists  m_2 \; ( \; sibling( m_1, m_2) \; )
\end{equation}
Conjecture (\ref{goal:oneSortedBroodSiblingDisjunctionAlphaTruthTest}) is classified as unknown (as expected, because its corresponding falsity-test is classified as proved), while conjectures (\ref{goal:oneSortedBroodSiblingDisjunctionBetaTruthTest}-\ref{goal:oneSortedBroodSiblingDisjunctionNegBetaTruthTest}) are classified as proved, because ATPs find a proof. 
Since, from Example \ref{ex:sibling-FT}, we already know that axiom (\ref{formula:oneSortedBroodSibling}) is defective, this additional testing has no effect on the grade of suitability of axiom (\ref{formula:oneSortedBroodSibling}). 
However, in the hypothetical case that falsity-test (\ref{goal:oneSortedBroodSiblingDisjunctionAlphaFalsityTest})
would have been classified as unknown (instead of proved), then axiom (\ref{formula:oneSortedBroodSibling}) would be considered partially suitable, since three truth-tests are proved, but one is unknown. 
\end{example}

\section{A Detailed Example} \label{section:example}

In this section, we provide a complete example of the sets of tests that are obtained from a formula by using the definitions introduced in Section \ref{section:tests}.

The following three axioms are obtained from \SUMO{} by renaming (for brevity) the original predicate and function symbols as follows: the predicates \textPredicate{ instance} and \textPredicate{component} have been renamed as $i$ and $c$ respectively; the constants \textConstant{CorpuscularObject}, \textConstant{Atom}, \textConstant{Proton} and \textConstant{Electron} have been renamed as $CO$, $At$, $Pr$ and $El$ respectively; and
the variables \textVariable{ATOM}, \textVariable{PROTON} and \textVariable{ELECTRON} have been renamed as $ a$, $ p$ and $ e$ respectively.
\begin{equation}
\label{axiom:domainComponent1} 
domain(c,1,CO)
\end{equation}
\begin{equation}
\label{axiom:domainComponent2} 
domain(c,2,CO)
\end{equation}
\begin{equation}
\label{axiom:componentAtom}
\forall a\;
(\;
i(a,At) \;\rightarrow\;
\exists p\;\exists e\;
(
c(p,a)\wedge c(e,a) 
\wedge i(p,Pr) \wedge i(e,El)
)
\;)
\end{equation}
Using the type information in axioms (\ref{axiom:domainComponent1}-\ref{axiom:domainComponent2}) (see \cite{ALR12}), (\ref{axiom:componentAtom}) is transformed into a rule axiom in \ADIMENSUMO{} v2.4 that is logically equivalent to the following sentence in af-nnf:
\begin{equation} \label{formula:exampleOneSortedComponentAtom}
\begin{array}{lll}
\phi \; = \; 
\forall  a \; ( \hspace{-6pt} & \multicolumn{2}{l}{\overbrace{ ( \neg i( a,CO) ) \vee ( \neg i( a,At) )}^{\psi_1} \; \vee} \\
 & \exists  p \; \exists  e \; ( \hspace{-6pt} & i( p,CO) \wedge i( e,CO) \; \wedge \\
 & & c( p, a) \wedge c( e, a) \; \wedge \\ 
 & & \underbrace{i( p,Pr) \wedge i( e,El)}_{\psi_2} \; ) \; )
\end{array}
\end{equation}
 Then
\begin{eqnarray*}
\FT{\phi} & = & \FT{\forall  a \; ( \psi_1 \vee \exists  p \; \exists  e \; \psi_2 )} \\
 & \stackrel{\forall}{=} & \FT{\psi_1 \vee \exists  p \; \exists  e \; \psi_2} \\
 & \stackrel{\vee}{=} & \FTP{\psi_1 \vee \exists  p \; \exists  e \; \psi_2} \\
 & \stackrel{\vee}{=} & \{ \; ( \psi_1 )^\forall,\; (\exists  p \; \exists  e \; \psi_2 )^\forall, \;( \neg \psi_1 )^\forall,\;(\neg \exists  p \; \exists  e \; \psi_2 )^\forall  \; \} \cup \\
 & & \hspace{2pt} \FTP{\psi_1} \;\cup\; \FTP{\exists  p \; \exists  e \; \psi_2}
\end{eqnarray*}
where $( \psi_1 )^\forall$, $( \neg \psi_1 )^\forall$, $( \exists  p \; \exists  e \; \psi_2 )^\forall$ and $( \neg \exists  p \; \exists  e \; \psi_2 )^\forall$ respectively denote the conjectures:
\begin{equation} \label{goal:exampleAntecedentOneSortedComponentAtom}
\forall  a \; ( \; ( \neg i( a,CO) ) \vee ( \neg i( a,At) ) \; )
\end{equation}
\begin{equation} \label{goal:exampleReverseAntecedentOneSortedComponentAtom}
\begin{array}{rl}
\forall  a \; \exists  p \; \exists  e \; (& i( p,CO) \wedge i( e,CO) \wedge c( p, a) \wedge c( e, a) \wedge \\
 & i( p,Pr) \wedge i( e,El) \; )
\end{array}
\end{equation}
\begin{equation} \label{goal:exampleReverseConsequentOneSortedComponentAtom}
\forall  a \; ( \; i( a,CO) \wedge i( a,At) \; )
\end{equation}
\begin{equation} \label{goal:exampleConsequentOneSortedComponentAtom}
\begin{array}{rl}
\forall  a \; \neg \; \exists  p \; \exists  e \; (&i( p, CO) \wedge i( e,CO) \wedge c( p, a) \wedge c( e, a) \wedge \\
 & i( p,Pr) \wedge i( e,El) \; )
\end{array}
\end{equation}
Then, we proceed to obtain the falsity-test proposed for $\psi_1$:
\begin{eqnarray*}
\FTP{\psi_1} & = & \FTP{( \neg i( a,CO) ) \vee ( \neg i( a,At) )} \\
 & \stackrel{\vee}{=} & \{ \; ( \neg i( a,CO) )^\forall, \; ( \neg i( a,At) )^\forall,\; ( i( a,CO) )^\forall,\; ( i( a,At) )^\forall\} \cup \\
 & & \hspace{25pt} \cup \;\FTP{\neg i( a,CO)}  \cup \FTP{\neg i( a,At)} \\
 & \hspace{-4pt}  \stackrel{literal}{=} \hspace{-4pt} &\{ \; ( \neg i( a,CO) )^\forall, \; ( \neg i( a,At) )^\forall,\; ( i( a,CO) )^\forall,\; ( i( a,At) )^\forall\}
\end{eqnarray*}
Next, the falsity-tests for $\exists  p \; \exists  e \; \psi_2$ are calculated as follows:
\begin{eqnarray*}
\FTP{\exists  p \; \exists  e \; \psi_2} & = & \{ \; ( \exists  e \; \psi_2 )^\forall, ( \neg \exists  e \; \psi_2 )^\forall \; \} \cup \FTP{\exists  e \; \psi_2} \\
 & \stackrel{\exists}{=} & \{ \; ( \exists  e \; \psi_2 )^\forall, ( \neg \exists  e \; \psi_2 )^\forall \; \} \cup \\
 & & \hspace{25pt} \{ \; ( \psi_2 )^\forall, ( \neg ( \psi_2 ) )^\forall \; \} \cup \FTP{\psi_2}
\end{eqnarray*}
Of the above tests, $( \neg \exists  e \; \psi_2 )^\forall$ and $( \neg \psi_2 )^\forall$ are identical. Finally,  the tests that are obtained from $\psi_2$ are (see Remark \ref{remark: n-ary}):
\begin{eqnarray*}
\FTP{\psi_2} & = & \FTP{i( p,CO) \wedge i( e,CO) \wedge c( p, a) \wedge c( e, a) \wedge i( p,Pr) \wedge i( e,El)}\\
 & = &  \{ \; ( i( p,CO) )^\forall, \; ( i( e,CO) )^\forall,\; c( p, a) )^\forall,   \\
 & & \hspace{10pt}   (c( e, a))^\forall ,\; ( i( p,Pr) )^\forall,\; ( i( e,El) )^\forall \} \; \cup \\ 
 & & \{ \; ( \neg i( p,CO) )^\forall, ( \neg i( e,CO) )^\forall, \; \neg c( p, a) )^\forall, \\
 & & \hspace{10pt}  ( \neg c( e, a) )^\forall \;  , ( \neg i( p,Pr) )^\forall  , ( \neg i( e,El) )^\forall \; \} 
\end{eqnarray*}
Some of the tests in $\FTP{\psi_2}$ are identical up to  renaming of quantified variables. For example, the tests $( i( p,CO) )^\forall$ and $( i( e,CO) )^\forall$ are identical {\it up to renaming (u.t.r.)} of the universally quantified variable. Additionally, test $( i( a,CO) )^\forall$, which is also u.t.r. identical to both, is already in the set  $\FTP{\psi_1}$. Consequently, the number of tests in $\FT{\phi}$ can be substantially reduced by removing duplicates. In total, 28 falsity-tests are obtained from $\FT{\phi}$, from which 13 redundant tests can be removed. It is worth noting that additional duplicated tests can arise when combining sets of falsity-tests for different axioms. To sum up, the set of falsity-tests for $\phi$ is:
\begin{eqnarray*}
\FT{\phi} & = & \{ \; ( \psi_1 )^\forall, ( \neg \psi_1 )^\forall, ( \exists  e \; \psi_2 )^\forall, ( \neg \exists  e \; \psi_2 )^\forall, \\
 & & \hspace{10pt} ( \neg i( a,CO) )^\forall, ( i( a,CO) )^\forall, ( \neg i( a,At) )^\forall, \\
 & & \hspace{10pt} ( i( a,At) )^\forall, ( \psi_2 )^\forall, ( c( p, a) )^\forall, \\
 & & \hspace{10pt} ( \neg c( p, a) )^\forall, ( i( p,Pr) )^\forall, ( \neg i( p,Pr) )^\forall, \\
 & & \hspace{10pt} ( i( e,El) )^\forall, ( \neg i( e,El) )^\forall \; \}
\end{eqnarray*}

In Section \ref{section:defects} we report on defects found using this set of falsity-tests.

Regarding truth-tests, we obtain the following set of tests by negating the falsisty-tests in $\FT{\phi}$:
\begin{eqnarray*}
\TT{\phi} & = & \{ \; ( \neg \psi_1 )^\exists, ( \psi_1 )^\exists, ( \neg \exists  e \; \psi_2 )^\exists, ( \exists  e \; \psi_2 )^\exists, \\
 & & \hspace{10pt}  ( i( a,CO) )^\exists, ( \neg i( a,CO) )^\exists, ( i( a,At) )^\exists, \\
 & & \hspace{10pt} ( \neg i( a,At) )^\exists, ( \neg \psi_2 )^\exists, ( \neg c( p, a) )^\exists, \\
 & & \hspace{10pt} ( c( p, a) )^\exists, ( \neg i( p,Pr) )^\exists, ( i( p,Pr) )^\exists, \\
 & & \tab ( \neg i( e,El) )^\exists, ( i( e,El) )^\exists \; \}
\end{eqnarray*}

\section{Correctness} 
\label{section:correctness}

As mentioned in Section \ref{section:tests}, our falsity-tests are related to the search of redundancies in axioms. 
In this section, we introduce a precise notion of redundancy and prove that our falsity-tests are relevant (or correct) in the sense that they really detect redundancy. 

\begin{definition} \label{defn:redundancy1}
Let $\Phi$ be a set of sentences and $\phi$ a sentence such that $\phi \in \Phi$. We say that $\phi$ contains a {\em redundancy}  if there exists a sentence $\phi'[\alpha \vee \beta]$ that is logically equivalent to $\phi$ such that $\phi'[ ( \alpha \vee \beta ) / \gamma ]$ is $\Phi$-equivalent to $\phi$ for $\gamma \in \{ \alpha, \beta \}$. Moreover, when $\gamma = \alpha$ (resp. $\gamma = \beta$)
we say that $\beta$ (resp. $\alpha$) is redundant in the subformula $\alpha \vee \beta$.
\end{definition}
In words, redundancy means that some subformula (of a sentence) can be eliminated without loss of essential information, and indeed this subformula is redundant. The sentence $\phi'$ mentioned in the above definition is the one provided by the following Lemma \ref{lemma:subformula-FT}, i.e. the {\it formula associated to} $\phi$ by the pair of falsity-tests $\{ ( \delta )^\forall, \; ( \neg \delta )^\forall \}$ (see Remark \ref{re:formulaass}).

\begin{lemma} \label{lemma:subformula-FT}
For any af-nnf sentence $\phi$, if $\{ ( \delta )^\forall, ( \neg \delta )^\forall\} \subseteq \FT{\phi}$ then $\phi$ is logically equivalent to a formula of the form
\begin{displaymath}
\quantifier{ x} \; ( \; ( ( \quantifier{ y} \; \delta ) \vee \gamma ) \wedge \psi \; )
\end{displaymath}
for some formulas $\gamma$ and $\psi$ and some (possibly empty) prefixes of quantifiers $\quantifier{ x}$ and $\quantifier{ y}$. Moreover, $\gamma$ is always different from the constant $\false$, but $\psi$ could be the constant $\true$.
\end{lemma}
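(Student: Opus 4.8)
The statement is a structural claim about the recursive definition of $FT$, so the natural approach is structural induction on the af-nnf formula $\phi$, mirroring exactly the case split in Definition \ref{defn:FalsityTests}. The plan is to prove the following (slightly strengthened) induction hypothesis: whenever $\{(\delta)^\forall, (\neg\delta)^\forall\} \subseteq \FT{\phi}$, the formula $\phi$ is logically equivalent to $\quantifier{x}\,(\,((\quantifier{y}\,\delta)\vee\gamma)\wedge\psi\,)$ with $\gamma\neq\false$ and $\psi$ possibly $\true$. Carrying the prefixes $\quantifier{x}$, $\quantifier{y}$ explicitly through the induction is what makes the recursion close up, since each quantifier case simply prepends a quantifier to one of the prefixes.

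First I would dispose of the base case: if $\phi$ is a literal then $\FT{\phi}=\emptyset$, so the hypothesis $\{(\delta)^\forall,(\neg\delta)^\forall\}\subseteq\FT{\phi}$ is vacuously false and there is nothing to prove. The inductive step splits according to the top connective of $\phi$. In the conjunction case $\phi=\alpha\wedge\beta$, the pair $\{(\delta)^\forall,(\neg\delta)^\forall\}$ must come from $\FT{\alpha}$, from $\FT{\beta}$, or from the $FT_0$-contribution; I would trace which subcall produced the pair and apply the induction hypothesis to that subformula, then re-wrap the result under the outer conjunction, absorbing the untouched conjunct into $\psi$ (this is precisely where allowing $\psi=\true$ or a nontrivial $\psi$ becomes essential). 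In the quantifier cases $\phi=\forall x\,\alpha$ or $\phi=\exists x\,\alpha$, note $\FT{\phi}=\FT{\alpha}$, so the pair already lies in $\FT{\alpha}$; the induction hypothesis gives the desired shape for $\alpha$, and prepending $Qx$ simply extends the outer prefix $\quantifier{x}$. The crux is the disjunction case $\phi=\alpha\vee\beta$, where $FT$ invokes $FT_0$. Here the pair $\{(\delta)^\forall,(\neg\delta)^\forall\}$ either is one of the four ``top-level'' tests $\{(\alpha)^\forall,(\beta)^\forall,(\neg\alpha)^\forall,(\neg\beta)^\forall\}$ generated directly by $FT_0$, or it arises from the recursive calls $\FTP{\alpha}$ or $\FTP{\beta}$ descending into one disjunct.

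The main obstacle, and where I would spend the most care, is a secondary induction on $FT_0$ needed to handle those recursive $\FTP{\cdot}$ contributions: $FT_0$ continues to decompose disjunctions, conjunctions, and quantifiers \emph{inside} $\alpha\vee\beta$, so a pair $\{(\delta)^\forall,(\neg\delta)^\forall\}$ can be produced by $FT_0$ from an arbitrarily deep subformula that sits on one side of the outer disjunction. I would prove by induction on the $FT_0$ recursion that whenever $\{(\delta)^\forall,(\neg\delta)^\forall\}\subseteq\FTP{\chi}$, the formula $\chi$ is logically equivalent to $\quantifier{y}\,\delta\vee\gamma'$ for some $\gamma'\neq\false$ (possibly after pulling quantifiers outward and rearranging disjuncts/conjuncts by associativity, commutativity, and distributivity). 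Feeding this $FT_0$-lemma into the outer disjunction case of $\phi=\alpha\vee\beta$ lets me exhibit $\delta$ as a disjunct of one side and fold the remainder plus the other disjunct into $\gamma$, taking $\psi=\true$. When instead the pair is one of the four direct $FT_0$ tests, e.g. $\delta=\alpha$, the equivalence is immediate with $\gamma=\beta$ and empty $\quantifier{y}$. The only genuinely delicate point is tracking that $\gamma$ is never forced to be $\false$: in every subcase there is always a sibling disjunct or conjunct remaining to serve as $\gamma$, and I would verify this residue is nonempty precisely because $FT$ only emits a test pair when the relevant connective has a second operand present.
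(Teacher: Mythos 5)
Your high-level architecture (structural induction mirroring the recursion of $FT$, plus a separate invariant for the $FT_0$ recursion) is workable, but the invariant you propose for the secondary $FT_0$ induction is false, and the disjunction case you build on it collapses. You claim that $\{ (\delta)^\forall, (\neg\delta)^\forall \} \subseteq \FTP{\chi}$ forces $\chi \equiv (\quantifier{y}\,\delta) \vee \gamma'$. But $FT_0$ does not only decompose disjunctions: by Definition \ref{defn:FalsityTests} it also emits the four tests at \emph{conjunction} nodes, so $\FTP{\alpha\wedge\beta}$ already contains $(\alpha)^\forall$ and $(\neg\alpha)^\forall$, and it keeps recursing below such nodes. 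Concretely, take $\phi = (p \wedge q) \vee r$ with $p,q,r$ atoms. Then $\{ (p)^\forall, (\neg p)^\forall \} \subseteq \FTP{p \wedge q} \subseteq \FT{\phi}$, yet $p\wedge q$ is not equivalent to $p \vee \gamma'$ for any $\gamma'$ (any valuation making $p$ true and $q$ false refutes the equivalence), and $\phi$ itself is not equivalent to anything of the form $p \vee \gamma$ either. So your plan for the disjunction case --- ``exhibit $\delta$ as a disjunct of one side \ldots{} taking $\psi=\true$'' --- fails: for this $\phi$ the lemma holds only with a nontrivial conjunctive residue, namely $\phi \equiv (p \vee r) \wedge (q \vee r)$, i.e.\ $\gamma = r$ and $\psi = q \vee r$. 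No amount of associativity, commutativity or distributivity rescues your weaker target shape, because distributing $(p\wedge q)\vee r$ yields a top-level \emph{conjunction}, not a disjunction.

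The missing idea is that the invariant carried through the $FT_0$ recursion must be the full shape of the lemma itself, $\chi \equiv \quantifier{x}\,(\,((\quantifier{y}\,\delta) \vee \gamma) \wedge \psi\,)$: unlike a bare disjunction, this shape is closed under all three operations the recursion passes through, namely conjoining a sibling (replace $\psi$ by $\psi\wedge\beta$), disjoining a sibling (by the distributivity $(A\wedge B)\vee\beta \equiv (A\vee\beta)\wedge(B\vee\beta)$ applied with $A = (\quantifier{y}\,\delta)\vee\gamma$, which replaces $\gamma$ by $\gamma\vee\beta$ and $\psi$ by $\psi\vee\beta$), and prefixing a quantifier (extend $\quantifier{x}$). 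This is exactly how the paper proceeds: one induction over the calls to $\FT{\_}$ and $\FTP{\_}$ carrying that single invariant, with the distributivity computation as the crucial step in the ``conjunction or quantifier inside a disjunction'' cases; the $\wedge\,\psi$ component is not an optional convenience but the reason the induction closes. Note also that the quantifier-extrusion steps (pulling $\quantifier{x}$ and $\quantifier{y}$ across $\vee\,\beta$ or $\wedge\,\beta$) are sound only because rectification guarantees the bound variables do not occur in the sibling $\beta$; the paper tracks this explicitly as part of its induction statement, and your sketch, which only gestures at ``pulling quantifiers outward,'' should do so as well.
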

\begin{proof} 

The proof is by induction on the number of calls to $\FT{\_}$ and $\FTP{\_}$ that are made to check that $\{ ( \delta )^\forall,\; ( \neg \delta )^\forall \} \subseteq \FT{\phi}$.

We are going to prove that for any subformula $\chi$ (could be a non-sentence) of $\phi$, if $\{ ( \delta )^\forall, \; ( \neg \delta )^\forall \} \subseteq \FT{\chi}$ then $\chi$ is logically equivalent to some formula of the form
$
\quantifier{ x} \; ( \; ( ( \quantifier{ y} \; \delta ) \vee \gamma ) \wedge \psi \; )
$
and, moreover, variables $\overline{ x}, \overline{ y}$ are the variables in some quantifier of a subformula of $\chi$. It is worth noting that $\phi$ has no variable clashing
(see Section \ref{section:tests}), hence the variables occurring in some quantifier of a subformula $\chi$ cannot appear in any quantifier of some subformula of $\phi$ different from $\chi$. The base step is when $\chi = \delta \vee \gamma$ so that
$\FT{\phi} = \FTP{\delta \vee \gamma} = \{ ( \delta )^\forall, \; ( \gamma )^\forall, \;( \neg \delta )^\forall, \; ( \neg \gamma )^\forall \} \cup \FTP{\delta} \cup \FTP{\gamma}$. So that the property holds for empty prefixes of quantifiers and $\psi = \true$.
Symmetrically for $\chi =  \gamma \vee \delta$.

For the inductive step, we distinguish the following cases according to the definition of $\FT{\_}$ and $\FTP{\_}$:
\begin{itemize}
\item $\FT{\chi} = \FTP{  \alpha_1 \wedge \beta_1 ) \vee \beta} \supset 
\FTP{\alpha_1 \wedge \beta_1} \cup \FTP{\beta}$,
and then
$\{ ( \delta )^\forall,\; ( \neg \delta )^\forall \} \supseteq \FTP{\alpha_1}$
or
$\{ ( \delta )^\forall, \; ( \neg \delta )^\forall \} \subseteq \FTP{\beta_1}$.

We prove only the case $\{ ( \delta )^\forall, \; ( \neg \delta )^\forall \} \supseteq \FTP{\alpha_1}$, since for $\beta_1$ the proof is identical by commutativity of conjunction.

By induction hypothesis, if $\{ ( \delta )^\forall, \; ( \neg \delta )^\forall \} \subseteq \FTP{\alpha_1}$, then there exists $\gamma'$ and $\psi'$ such that:
$$
\alpha_1 \; \equiv \; \quantifier{ x} \; ( \; ( ( \quantifier{ y} \; \delta ) \vee \gamma' ) \wedge \psi' \; )
$$
where variables $\overline{ x}, \overline{ y}$ cannot appear either in $\beta_1$ or in $\beta$. Therefore:
\begin{eqnarray*}
\chi & \equiv & ( \; ( \; \quantifier{ x} \; ( \; ( ( \quantifier{ y} \; \delta ) \vee \gamma' ) \wedge \psi' \; ) \; ) \wedge \beta_1 \; ) \vee \beta \\
 & \equiv & ( \; \quantifier{ x} \; ( \; ( ( \quantifier{ y} \; \delta ) \vee \gamma' ) \wedge ( \psi' \wedge \beta_1 ) \; ) \; ) \vee \beta \\
 & \equiv & \quantifier{ x} \; ( \; (( \quantifier{ y} \; \delta ) \vee ( \gamma' \vee \beta ) ) \wedge ( ( \psi' \wedge \beta_1 ) \vee \beta ) \; )
\end{eqnarray*}
Hence, we can take $\gamma' \vee \beta$ as $\gamma$ and 
$( \psi' \wedge \beta_1 ) \vee \beta$ as $\psi$, and the property of variables $\overline{ x}, \overline{ y}$ is trivially preserved.

\item $\FT{\chi} = \FTP{( \forall z \; \alpha ) \vee \beta} \supset 
\FTP{\alpha} \cup \FTP{\beta}$,
and then
$\{ ( \delta )^\forall, \; ( \neg \delta )^\forall \} \subseteq \FTP{\alpha}$.
By induction hypothesis, there exist $\gamma'$ and $\psi'$ such that:
$$
\alpha \; \equiv \; \quantifier{ x} \; ( \; ( ( \quantifier{y} \; \delta ) \vee \gamma' ) \wedge \psi' \;)
$$
Therefore:
$
\chi \; \equiv \; \forall  z \; ( \; \quantifier{ x} \; ( \; ( ( \quantifier{ y} \; \delta ) \vee \gamma' ) \wedge \psi' \; ) \; ) \vee \beta
$.
Since $ z$ and $\overline{ x}, \overline{ y}$ do not appear in $\beta$, we have that:
$$
\chi \; \equiv \; \forall  z \; \quantifier{ x} \; ( \; ( ( \quantifier{ y} \; \delta ) \vee ( \gamma' \vee \beta ) ) \wedge ( \psi' \vee \beta ) \; )
$$
Thus, the property holds for an extension of the outermost prefix with $ z$, and for $\gamma' \vee \beta$ as $\gamma$ and $\psi' \vee \beta$ as $\psi$. 

\item The proof for $\FT{\chi} = \FTP{ ( \exists  z \; \alpha ) \vee \beta } \supset \FTP{\alpha} \cup \FTP{\beta}$ and $\{ ( \delta )^\forall,\; ( \neg \delta )^\forall \} \subseteq \FTP{\alpha}$, is identical to the previous one.

\item Suppose that $\FT{\chi} = \FT{\alpha \wedge \beta} =
\FT{\alpha} \cup \FT{\beta}$ and $\{ ( \delta )^\forall, \; ( \neg \delta )^\forall \} \subseteq \FT{\alpha}$.
By induction hypothesis
$$
\alpha \; \equiv \; \quantifier{ x} \; ( \; ( ( \quantifier{ y} \; \delta ) \vee \gamma' ) \wedge \psi' \; )
$$
for some $\gamma'$ and $\psi'$. Therefore, since $\overline{ x}$ does not appear in $\beta$
\begin{eqnarray*}
\chi & \equiv & ( \; \quantifier{ x} \; ( \; ( ( \quantifier{ y} \; \delta ) \vee \gamma' ) \wedge \psi' \; ) \; ) \wedge \beta \\
 & \equiv & \quantifier{ x} \; ( \; ( ( \quantifier{ y} \; \delta ) \vee \gamma' ) \wedge ( \psi' \wedge \beta ) \; )
\end{eqnarray*}
and the lemma property is true for $\gamma = \gamma'$ and $\psi = \psi' \wedge \beta$.

\item For $\FT{\chi} = \FT{ \forall  z \; \alpha } \supset \FT{\alpha}$ and $\{ ( \delta )^\forall, \; ( \neg \delta )^\forall \} \subseteq \FT{\alpha}$.
By induction hypothesis
$$
\alpha \; \equiv \; \quantifier{ x} \; ( \; ( ( \quantifier{ y} \; \delta ) \vee \gamma ) \wedge \psi \; )
$$
for some $\gamma$ and $\psi$. Therefore:
$$
\chi \; \equiv \; \forall  z \; \quantifier{ x} \; ( \; ( ( \quantifier{ y} \; \delta ) \vee \gamma ) \wedge \psi \; )
$$
This ensures the lemma property by enlarging the outermost prefix of quantifiers with $\forall  z$.

\item The proof for $\FT{\chi} = \FT{ \exists  z \; \alpha } \supset \FT{\alpha}$ and  $\{ ( \delta )^\forall, \; ( \neg \delta )^\forall \}\subseteq \FT{\alpha}$ is identical to the previous one.
\end{itemize}
Therefore, for any $\{ ( \delta )^\forall,\; ( \neg \delta )^\forall \} \subseteq \FT{\phi}$, the axiom $\phi$ is logically equivalent to a formula of the form $\quantifier{ x} \; ( \; ( ( \quantifier{ y} \; \delta ) \vee \gamma ) \wedge \psi \; )$.
\end{proof}

\begin{remark}
\label{re:formulaass}
In what follows, we say that $( \quantifier{ y} \; \delta )\vee \gamma )$ is the {\it (sub)formula associated to} $\phi$ by the pair of falsity-tests $\{ ( \delta )^\forall, \; ( \neg \delta )^\forall \}$.
\end{remark}

Next, we introduce the notion of {\it relevant} falsity-test in order to set out the correctness result in Theorem \ref{thm:FT-correctness}.

\begin{definition} \label{defn:correctness-FT}
Let $\Phi$ be any set of af-nnf sentences and $\phi$ any sentence such that $\phi \in \Phi$. If $( \delta )^\forall \in \FT{\phi}$ (resp. $( \neg \delta )^\forall \in \FT{\phi}$), we say that $( \delta )^\forall$ (resp. $( \neg \delta )^\forall$) is a {\it relevant falsity-test for $\Phi$} whenever $\Phi \models \delta^\forall$ (resp. $\Phi \models ( \neg \delta )^\forall$).
\end{definition}
If $\delta^\forall$ or $( \neg \delta )^\forall$ is relevant, then the redundant subformula of $\phi'$ is a superformula of $\delta$ and $\phi$ contains some redundancy. Redundant subformulas reveal the existence of defects. In addition, the proof obtained for falsity-tests can assist the correction of defects, but the correction itself is still a manual task. 
Some real examples on this issue are described in Section \ref{section:defects}.

Next, we provide a formal proof of the relevance of falsity-tests.
\begin{theorem} \label{thm:FT-correctness}
For any consistent set of af-nnf sentences $\Phi$ such that $\phi \in \Phi$, each conjecture in $\FT{\phi}$ is a relevant falsity-test for $\Phi$.
\end{theorem}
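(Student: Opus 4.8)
The plan is to reduce the whole statement to Lemma~\ref{lemma:subformula-FT} followed by a semantic collapse of the associated subformula. First I would record a structural fact about Definition~\ref{defn:FalsityTests}: every element is introduced by $\FTP{\_}$ in a $\pm$ pair, so a trivial structural induction shows that whenever $( \delta )^\forall \in \FT{\phi}$ we also have $( \neg \delta )^\forall \in \FT{\phi}$, and conversely. Hence it suffices to fix an arbitrary pair $\{ ( \delta )^\forall, ( \neg \delta )^\forall \} \subseteq \FT{\phi}$ and to show that relevance of either member (i.e. its being entailed by $\Phi$, as in Definition~\ref{defn:correctness-FT}) forces a redundancy in $\phi$; this is precisely what makes each conjecture a relevant (correct) falsity-test. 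Applying Lemma~\ref{lemma:subformula-FT} to this pair yields a sentence
\[
\phi' \; = \; \quantifier{ x} \; ( \; ( ( \quantifier{ y} \; \delta ) \vee \gamma ) \wedge \psi \; )
\]
logically equivalent to $\phi$ with $\gamma \neq \false$. By Remark~\ref{re:formulaass}, $( \quantifier{ y} \; \delta ) \vee \gamma$ is the subformula associated to $\phi$, and it is exactly the disjunction $\alpha \vee \beta$ (with $\alpha = \quantifier{ y} \; \delta$ and $\beta = \gamma$) on which Definition~\ref{defn:redundancy1} is applied.

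The semantic heart is to show that $\Phi$-validity or $\Phi$-unsatisfiability of $\delta$ collapses the disjunct $\quantifier{ y} \; \delta$. Suppose first that $( \delta )^\forall$ is relevant, i.e. $\Phi \models \delta^\forall$. Then in every model of $\Phi$ the matrix $\delta$ holds under every assignment, so $\quantifier{ y} \; \delta$ holds under every assignment to the remaining variables $\overline{x}$ as well; here I would push the valuation outward through the mixed prefix $\quantifier{ y}$ one quantifier at a time, using non-emptiness of FOL domains to discharge the existential steps. Thus $\quantifier{ y} \; \delta$ is $\Phi$-equivalent to $\true$, whence $( \quantifier{ y} \; \delta ) \vee \gamma$ is $\Phi$-equivalent to $\alpha = \quantifier{ y} \; \delta$. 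Substituting $\alpha$ for $\alpha \vee \beta$ inside $\phi'$ then gives $\phi'[ ( \alpha \vee \beta ) / \alpha ]$, which is $\Phi$-equivalent to $\phi'$ and hence to $\phi$; by Definition~\ref{defn:redundancy1} this exhibits $\beta = \gamma$ as redundant, so $\phi$ contains a redundancy. Symmetrically, if $( \neg \delta )^\forall$ is relevant, the same prefix argument makes $\quantifier{ y} \; \delta$ $\Phi$-equivalent to $\false$, so $( \quantifier{ y} \; \delta ) \vee \gamma$ is $\Phi$-equivalent to $\beta = \gamma$; substituting $\beta$ for $\alpha \vee \beta$ preserves $\Phi$-equivalence and exhibits the superformula $\alpha = \quantifier{ y} \; \delta$ of $\delta$ as redundant.

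To justify the substitution steps I would prove (or simply invoke) the standard congruence property of $\Phi$-equivalence: replacing a subformula by a $\Phi$-equivalent one under a common prefix of quantifiers preserves $\Phi$-equivalence of the enclosing sentence. This is what legitimizes passing from ``$( \quantifier{ y} \; \delta ) \vee \gamma$ is $\Phi$-equivalent to $\alpha$ (resp.\ $\beta$)'' to ``$\phi'[ ( \alpha \vee \beta ) / \alpha ]$ (resp.\ $\phi'[ ( \alpha \vee \beta ) / \beta ]$) is $\Phi$-equivalent to $\phi'$'', which is exactly the shape required by Definition~\ref{defn:redundancy1}. Since every conjecture of $\FT{\phi}$ belongs to such a pair, relevance of any conjecture yields a redundancy in $\phi$, establishing the claim.

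I expect the main obstacle to be the collapse of $\quantifier{ y} \; \delta$ through a prefix mixing $\forall$ and $\exists$: one must verify carefully that $\Phi$-validity of $\delta$ propagates outward across every quantifier (the existential steps relying on non-empty domains) and, dually, that $\Phi$-unsatisfiability propagates too, so that the two relevance cases genuinely reduce the disjunction to $\true$ and to $\false$, respectively. The consistency hypothesis on $\Phi$ does not drive these implications but guarantees that the detected redundancy is genuine rather than an artifact of an unsatisfiable theory, in which every substitution would be vacuously $\Phi$-equivalent.
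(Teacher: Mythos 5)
Your proposal is correct and follows essentially the same route as the paper's own proof: apply Lemma~\ref{lemma:subformula-FT} to a pair $\{(\delta)^\forall,(\neg\delta)^\forall\}$, collapse the disjunct $\quantifier{ y}\,\delta$ to $\true$ or $\false$ according to which member of the pair is entailed by $\Phi$, and conclude by substitutivity of $\Phi$-equivalent subformulas that the associated disjunction can be replaced by a single disjunct, which is exactly the redundancy of Definition~\ref{defn:redundancy1}. The extra details you supply --- that $\FT{\phi}$ consists of $\pm$ pairs, the propagation of $\Phi$-validity (resp.\ $\Phi$-unsatisfiability) of $\delta$ outward through the mixed quantifier prefix using non-empty domains, and the explicit congruence property --- are precisely the steps the paper leaves implicit or labels as trivial, and your closing remark that consistency of $\Phi$ is not used in the argument but only makes the detected redundancy non-vacuous is likewise accurate.
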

\begin{proof}
Let $\{ ( \delta )^\forall, \; ( \neg \delta )^\forall \} \subseteq \FT{\phi}$. By Lemma \ref{lemma:subformula-FT}, $\phi$ is logically equivalent to:
\begin{displaymath}
\phi' \; = \; \quantifier{ x} \; ( \; ( \quantifier{ y} \; \delta ) \vee \gamma \; ) \wedge \psi \; )
\end{displaymath}
Hence, we check the following two statements:
\begin{itemize}
\item[\rm (a)] If $\Phi \models ( \delta )^\forall$,
then $\Phi \models \phi' \leftrightarrow \phi'[ ( \; ( \quantifier{ y} \; \delta ) \vee \gamma \; ) / ( \quantifier{ y} \; \delta ) ]$.
\item[\rm (b)] If $\Phi \models (\neg\delta)^\forall$, then $\Phi \models \phi' \leftrightarrow \phi'[ ( \; ( \quantifier{ y} \; \delta ) \vee \gamma \; ) / \gamma ]$.
\end{itemize} 
Note that each statement not only ensures that the corresponding subformula is redundant in $\phi'$, but also the substitution specifies what the redundant subformula is. 

In order to check (a) and (b), we proceed by substitutivity of subformulas that are  $\Phi$-equivalent.
For statement (a), if $\Phi \models ( \delta )^\forall$, then it is trivial that $\Phi \models ( \quantifier{ y} \; \delta )^\forall$. Hence, $( \quantifier{ y} \; \delta ) \vee \gamma$ is $\Phi$-equivalent to $\quantifier{ y} \; \delta$.
For statement (b), if $\Phi \models ( \neg \delta)^\forall$, then 
$\Phi \models ( \neg \overline{Qy} \; \delta )^\forall$. Hence, $( \quantifier{ y} \; \delta) \vee \gamma$ is $\Phi$-equivalent to $\gamma$.
\end{proof}

Theorem \ref{thm:FT-correctness} ensures the correctness of our method. By Theorem \ref{thm:FT-correctness}, whenever a falsity-test $\alpha\in FT(\phi)$ is proved to be entailed by the ontology (where $\phi$ belongs to), we can ensure that there is a {\em relevant redundancy} in the axiom $\phi$. Since truth-tests are the negations of falsity-tests, the fact of proving a truth-test guarantees the absence of a possible redundancy. Consequently, the complete suitability of an axiom $\phi$ in the ontology depends on proving every test in $\TT{\phi}$.\\

According to the above definitions and results, our method relies on testing whether the whole ontology entails a set of tests. In particular, let $\alpha$ be a test that has been generated from an axiom $\phi$, then $\phi$ is included in the premises used by the ATP to check whether $\alpha$ is entailed. From the practical point of view this is the easiest way to perform the testing, since the set of premises is fixed for testing the whole set of generated tests. From the theoretical point of view, Theorem \ref{thm:FT-correctness} ensures that whenever $\alpha$ is proved, there is a relevant redundancy in the axiom $\phi$.\footnote{A different issue, that we discuss in Section \ref{section:defects}, is to look for the defect that causes this redundancy. }
Moreover, we can ensure that deleting the axiom $\phi$ from the premises (when checking $\alpha$) would prevent some useful inferences revealing redundancies. Let us explain here a simplification of a real example to illustrate this matter.
\begin{example}
Consider an ontology $\Phi$ formed by three axioms:
\begin{eqnarray}
\label{r(x)r(y)}
     \phi_1 & = & \forall x \;\forall y\;(\;(p(x) \wedge q(x,y)) \rightarrow r(x) \;)  \\
\nonumber
    \phi_2 & = &  \forall x \;(\;p(x) \rightarrow \neg r(x) \;) \\
\nonumber
    \phi_3 & = &  \forall x \;(\;(\exists y\;q(x,y)) \rightarrow \neg r(x) \;) 
\end{eqnarray}
It is easy to see that a falsity-text in $\FT{\phi_1}$ is 
$\forall x \;\forall y\;(\;\neg p(x) \vee\neg q(x,y)\;)$ and also that this falsity-test is entailed by $\Phi$, but it is not entailed by $\Phi\setminus\{\phi_1\}$.
Hence, the redundancy of the antecedent of (\ref{r(x)r(y)})
can be detected only if 
$\phi_1$ belongs to the the set of premises. 
\end{example}

The above example is an abstraction and simplification of the first example explained in Subsection \ref{subsection:typos}, there we also explain that it is really caused by a simple typo in $\phi_1$. In fact, the typo in the real example corresponds to have written $r(x)$ instead of  $r(y)$ in $\phi_1$ above.

\section{Experimentation: Examples of Detected Defects} 
\label{section:defects}

In this section, we illustrate with examples the defects that we have detected ---using an implementation of our methodology--- in the ontologies \DOLCE{}, \KEPLER{} and \ADIMENSUMO{} (see Section \ref{section:FOL-Ontologies} for an introduction to them).
Our method is based on detecting redundancies. To be more precise our technique is focussed in looking for {\it redundant subformulas inside axioms} (using falsity-tests)  or their absence (using truth-tests).
Indeed, according to Theorem \ref{thm:FT-correctness}, whenever an ATP proves the entailment of a falsity-test $\alpha$ extracted from an axiom $\phi$, there is a disjunction $\alpha\vee\beta$ that is a subformula of $\phi$ such that either $\alpha$ or $\beta$ are redundant in the sense that the disjunction $\alpha\vee\beta$ can be substituted by one of its disjuncts ($\alpha$ or $\beta$) preserving equivalence. So, in some sense, these redundancies are the defects we are basically looking for, but there are different causes that provoke a subformula to be technically redundant in the sense of Theorem \ref{thm:FT-correctness}. On one hand, some redundancies arise because there is a different problem either in the concerned axiom or in some set of axioms that define related concepts. For example, a typo in the concerned axiom or some inaccuracy in the axioms defining a concept related with the concerned axiom.
In other words, sometimes the defect can be fixed as expected (i.e. by eliminating a redundant subformula), but sometimes the detected redundancy could be caused by a different defect. 
For example, the redundancy detected in (\ref{r(x)r(y)}) would be (indeed, it is) really caused by a typo: the consequent of (\ref{r(x)r(y)}) should by $r(y)$ instead of $r(x)$.
On the other hand, sometimes redundant axioms are intentionally introduced with different objectives, e.g.  a) to facilitate the proof of conjectures; b) to maintain an uniform style of modelling; c) to communicate design decisions; d) to improve the documentation of the ontology. Additionally, automatic transformations often introduce redundant subformulas. Since the correction of defects is not automatic but manual, the ontologist/expert has to decide whether the detected redundancies need to be corrected or not. 
In Subsections \ref{subsection:typos}-\ref{subsec:incorrect-axioms}, we provide some examples of detected redundancies that --for that reason-- do not require any correction, along with other examples that require correction.
For the sake of presentation we split the founded defects in four categories: {\it typos}, {\it redundant axioms}, {\it redundant subformulas (in axioms)} and {\it incorrect (inaccurate) axioms}. We provide examples of each category in the following four subsections.

\subsection{Typos} \label{subsection:typos}

{\it Typos} are syntactical errors that are very simple to correct. A first example of typo was in the following  axiom of \ADIMENSUMO{}:
\begin{eqnarray}
\nonumber
\forall c\;\forall g\;(\;
& \hspace{-5mm} (  & \hspace{-5mm} 
instance(c,OrchestralConducting)\wedge patient(c,g) )\\
& \rightarrow &
instance(c,Orchestra)
\;\;)
\label{axiom:WrongOrchestralConducting}
\end{eqnarray}
which aims to state that every participant in an instance of the process \textConstant{OrchestralConducting} is an instance of \textConstant{Orchestra}. However, the following falsity-test was proved:
\begin{equation}
\forall c\;\forall g\;(\;
 (  \;
\neg instance(c,OrchestralConducting)\\
\vee 
\neg patient(c,g) 
\;)
\label{test:WrongOrchestralConducting}
\end{equation}
%
This suggest that the antecedent of axiom (\ref{axiom:WrongOrchestralConducting}) is redundant, however the proof given by the ATP utilizes an axiom stating that the first argument of \textPredicate{patient} (i.e. $c$ in (\ref{axiom:WrongOrchestralConducting})) should be a \textConstant{Process}. By disjointness of the class \textConstant{Process} with the class \textConstant{Orchestra} --which is entailed passing through several superclasses of the latter-- it is deduced that  $c$ cannot be an instance of \textConstant{Orchestra}, hence using axiom (\ref{axiom:WrongOrchestralConducting}) itself, falsity-test (\ref{test:WrongOrchestralConducting}) is entailed. The hint that the conflict comes from the type of the first argument of \textPredicate{patient} made us realize that it is the second (but not the first) argument of \textPredicate{patient} that should be an \textConstant{Orchestra}. 
Hence, in axiom (\ref{axiom:WrongOrchestralConducting}), the third occurrence of $c$ is a typo. In \ADIMENSUMO{} v2.6, it has been replaced by $g$:
\begin{eqnarray}
\nonumber
\forall c\;\forall g\;(\;
& \hspace{-5mm} (  & \hspace{-5mm} 
instance(c,OrchestralConducting)\wedge patient(c,g) )\\
& \rightarrow &
instance(g,Orchestra)
\;\;)
\label{axiom:CorrectOrchestralConducting}
\end{eqnarray}

An example of a typo that was detected in \KIFDOLCE{} is the following axiom, where \textPredicate{sb} stands for the subsumption relation and \textPredicate{psb} stands for the proper subsumption relation:
\begin{eqnarray}
\nonumber
\forall w\;\forall f\;\forall g\; 
(\; 
&  \hspace{-5mm} (  &
 \hspace{-5mm} 
world(w)\wedge universal(f)\wedge universal(g)\;)\\
& \rightarrow  &
(\;psb(w,f,g)\leftrightarrow(sb(w,f,g)\wedge\neg sb(w,f,g))\;)
\;)
\label{axiom:wrongProperlySubsuming}
\end{eqnarray}
The following falsity-text was generated from this axiom:
\begin{equation}
\label{goal:properlySubsuming}
\forall w\;\forall x\;\forall y\;
(\;
\neg psb(w,x,y)
\;)
\end{equation}
It was classified as proved, and its proof reveals that, in axiom (\ref{axiom:wrongProperlySubsuming}), the relation \textPredicate{psb} was incorrectly defined:
the last two occurrences of $f$ and $g$ in axiom (\ref{axiom:wrongProperlySubsuming}) 
were swapped, i.e. the last literal in (\ref{axiom:wrongProperlySubsuming}) should be
$
\neg sb(w,g,f).
$
%
After correcting this typo, 606 falsity-tests turned from proved into unknown.

\subsection{Redundant Axioms}

{\it Redundant axioms} are axioms that do not add any reasoning power to the ontology, because they are already entailed. An example is the \ADIMENSUMO{} axiom which yields the formula (\ref{formula:CenterOfCircle}) in Section \ref{section:methodology}.
As explained there, this defect was detected by means of falsity-test (\ref{goal:CenterOfCircle}). In \ADIMENSUMO{} v2.6 axiom (\ref{formula:CenterOfCircle}) is removed, though a proper axiomatization of \textConstant{CenterOfCircleFn} would be more convenient.\\
We have also found an example of a redundant axiom in \KIFDOLCE{} (after we had corrected axiom (\ref{axiom:wrongProperlySubsuming})).
The role of the following axiom is to define the disjointness relation \textPredicate{dj} of universals:
%
\begin{eqnarray}
\nonumber
\forall w\;\forall f\;\forall g\; 
(\; 
&  \hspace{-5mm} (  &
 \hspace{-5mm} 
world(w)\wedge universal(f)\wedge universal(g)\;)\\
& \rightarrow  &
(\;dj(w,f,g)\leftrightarrow\phi\;)
\;)
\label{axiom:disjointUniversals}
\end{eqnarray}
where $\phi$ is the formula that states  the disjointness of universals \textVariable{F} and \textVariable{G}, but it is not relevant for the present discussion.
The following falsity-test is proved:
\begin{equation}
\label{goal:world} 
\forall w\;(\;\neg world(w)\;)
\end{equation}
Therefore, the negation of the antecedent of axiom (\ref{axiom:disjointUniversals}) is inferred from the ontology.
Hence, the consequent of axiom (\ref{axiom:disjointUniversals}) is redundant, and thus the axiom itself is classified as redundant.
This means that \textPredicate{dj} has no associated definition (axiomatization) in \KIFDOLCE{}. By analysing the proof provided by the ATP, we discover that the axiomatization in \KIFDOLCE{} prevents the introduction of worlds and particulars. In fact, in \KIFDOLCE{} the definition of any world or any particular yields to a contradiction. This defect does not seem to be trivial to fix and it is beyond the scope of this paper. In \KIFDOLCE{}, there are 91 different falsity-tests that were proved due to this defect.\\

In the ontology \KEPLER{}, the following atom:
\begin{equation} \label{formula:truthValue}
p(S(Bool,T))
\end{equation}
encodes the boolean constant for truth as a formula, indeed it is  called {\it aTRUTH}.
Atom (\ref{formula:truthValue}) is used as a subformula in 19 axioms, which produces 14 unique falsity-tests enabling the detection of the redundant uses of $p(S(Bool,T)$. For example, $p(S(Bool,T))$ is redundantly used in the following axiom (called {\it aREFLu\_CLAUSE}):
\begin{equation} \label{formula:reflexivity}
\forall  a \; \forall  x \; ( \; ( S( a, x) = S( a, x) ) \; \to \; p(S(Bool,T) \; )
\end{equation}
Indeed, for this axiom, we generate the  falsity-test:
\begin{equation} \label{goal:reflexivity}
\forall  a \; \forall  x \; ( \; S( a, x) = S( a, x) \; )
\end{equation}
which is easily proved since equality is defined as being reflexive in FOL. Consequently, axiom (\ref{formula:reflexivity}) is redundant.

\subsection{Redundant Subformulas}
\label{subsec:redsubf}
{\it  Redundant subformulas (in axioms)} can be simply removed from axioms without affecting the set of conjectures that can be entailed from the ontology. This kind of redundancy is often introduced by automatic transformation of axioms, such as the translation of {\it domain axioms} in \ADIMENSUMO{} or the transformation of \KIFDOLCE{} into \CASLDOLCE{}. An example is given by the following axioms in \CASLDOLCE{}, which define the disjointness of the top classes {\it endurant}, {\it perdurant} and {\it quality} with {\it abstract}:
\begin{eqnarray}
 & \forall  y_0 \; ( \; aB( y_0) \; \to \; pT(y_0) \;) & \label{formula:AbstractCASLDOLCE} \\
 & \forall  x_1 \; ( \; eDorPDorQ(x_1) \; \to \; pT(x_1)) \;) & \label{formula:EndurantPerdurantQualityCASLDOLCE} \\
 & \forall  x_0 \; ( \; pT( x_0) \; \to \; \neg ( aB(x_0) \wedge eDorPDorQ(x_0) ) \; ) & \label{formula:partitionCASLDOLCETriviallyRedundant}
\end{eqnarray}
It is easy to see that the falsity-test 
\begin{equation} \label{formula:partitionCASLDOLCE}
\forall  x_0 \; \neg ( aB(x_0) \wedge eDorPDorQ(x_0) )
\end{equation}
is entailed by axioms (\ref{formula:AbstractCASLDOLCE}-\ref{formula:partitionCASLDOLCETriviallyRedundant}).
This ensures that the antecedent of axiom (\ref{formula:partitionCASLDOLCETriviallyRedundant}) is redundant. Therefore, we have replaced axiom  (\ref{formula:partitionCASLDOLCETriviallyRedundant})
with (\ref{formula:partitionCASLDOLCE}).\\

\subsection{Incorrect Axioms}
\label{subsec:incorrect-axioms}

{\it Incorrect (or inaccurate) axioms} are sentences giving an inaccurate definition of the term they aim to define.\\
A first example of an incorrect axiom is given by axiom (\ref{axiom:BroodSibling}) from \ADIMENSUMO.
%
Falsity-test  (\ref{goal:oneSortedBroodSiblingDisjunctionAlphaFalsityTest})
is proved, suggesting that the antecedent of axiom (\ref{axiom:BroodSibling}) is redundant. The proof provided by the ATP reveals that the problem is related with
axiom (\ref{axiom:siblingIrreflexiveRelation}) which ensures that the relation \textPredicate{sibling} is irreflexive.
Consequently, in \ADIMENSUMO{} v2.6, we correct this defect by replacing axiom (\ref{axiom:BroodSibling}) with:
%
\begin{eqnarray}
\nonumber
\forall m_1\;\forall m_2\; \forall b\; 
(\; 
\hspace{-5mm} & ( & \hspace{-5mm} instance(b,Brood) \wedge 
member(m_1,b) \wedge
member(m_2,b)\wedge m_1 \neq m_2
) \\
& \;\rightarrow \; &
sibling(m_1,m_2)       \label{axiom:BroodSiblingcorrected}
\;)
\end{eqnarray}

Another example of an incorrect axiom is given in Section \ref{section:example}: falsity-test (\ref{goal:exampleAntecedentOneSortedComponentAtom}) is proved, and therefore the antecedent of the implication in axiom (\ref{axiom:componentAtom}) seems to be redundant. However, the proof of this falsity-test reveals that the class \textConstant{Atom} (abbreviated $At$) is a subclass of \textConstant{Substance} and the latter does not have common instances with \textConstant{CorpuscularObject} (abbr. $CO$), because \textConstant{Substance} and $CO$ are disjoint classes. By axiom (\ref{axiom:domainComponent2}), on the domain of predicate $c$ (recall \textPredicate{component}), 
the variable $a$ 
must be an instance of $CO$. We realized that the redundancy of the antecedent of axiom (\ref{axiom:componentAtom})
is due to this misuse of the relation $c$ (recall $component$) in the consequent of (\ref{axiom:componentAtom}). In \ADIMENSUMO{} v2.6, we fix this defect by replacing relation \textPredicate{component} (abbr. $c$) by  \textPredicate{part} in axiom (\ref{axiom:componentAtom}). The new axiom is:
\begin{equation}
\label{axiom:repaired}
\forall a\;
(\;
i(a,At) \;\rightarrow\;
\exists p\;\exists e\;
(
part(p,a)\wedge part(e,a) 
\wedge i(p,Pr) \wedge i(e,El)
)
\;)
\end{equation}
Before fixing that, falsity-test (\ref{goal:exampleConsequentOneSortedComponentAtom}) was also proved as \textConstant{Proton} (abbr. $Pr$) and \textConstant{Electron} (abbr. $El$) are also subclasses of \textConstant{Substance} and, therefore, they do not have common instances with $CO$ (\textConstant{CorpuscularObject}). This proved falsity-test is related to the redundancy of the consequent of axiom (\ref{axiom:componentAtom}), however the proof reveals the same misuse of the relation $c$ (or $component$), so that the repaired axiom (\ref{axiom:repaired}) also serves to fix this problem.

\section{Experimentation Results for \DOLCE{}, \KEPLER{} and \ADIMENSUMO{}} 
\label{section:experimentation}

In this section, we report on the experimentation results we obtained by testing three FOL ontologies: \DOLCE{}, \KEPLER{} and \ADIMENSUMO{}. 


\begin{table}[h!] \centering
\caption{\label{table:Testing} Number of (tested) rules and (unique) falsity-tests for each ontology
}
\begin{tabular}{lrrrr}
\hline 
 & \multicolumn{2}{c}{{Rules}} & \multicolumn{2}{c}{{Falsity-Tests}} \\
 & \multicolumn{1}{c}{{Total}} & \multicolumn{1}{c}{{Tested}} & \multicolumn{1}{c}{{Total}} & \multicolumn{1}{c}{{Unique}} \\
\hline \\[-10pt]
{\KIFDOLCE{}} & 257 & 215 & 10,151 & 2,138 \\
{\CASLDOLCE{}} & 416 & 378 & 6,637 & 1,266 \\
{\KEPLER{}} & 78,225 & 5,753 & 163,751 & 37,698 \\
{\ADIMENSUMO{} v2.4} & 2,785 & 1,622 & 27,392 & 7,996 \\
{\ADIMENSUMO{} v2.6} & 2,799 & 1,629 & 27,499 & 8,010 \\
\hline
\end{tabular}
\end{table}

We consider two different versions of \DOLCE{} and \ADIMENSUMO{}. In Table \ref{table:Testing}, we provide some general figures about the three ontologies.
Given an axiom $\phi$, our method generates at least one test for $\phi$ whenever the af-nnf of $\phi$ contains at least one disjunction connective. Hence, axioms that do not satisfy this condition --e.g. universally closed conjunctions of literals-- are not tested by our method.
In Table \ref{table:Testing}, the two columns ``Rules" respectively stand for
the total number of rule axioms in each ontology (also reported in Table \ref{table:FOLOntologiesFigures}) and the number of tested rules. 
In the columns ``Falsity-tests" we provide the total number of falsity-tests generated for each ontology and the number of different falsity-tests.
The number of truth-tests is equal to the number of falsity-tests by definition.

In the rest of this section, we report on the experimentation results for each ontology in the respective Subsections \ref{subsection:DOLCE}, \ref{subsection:KEPLER} and \ref{subsection:AdimenSUMO}.
Additionally, in Subsection \ref{subsection:AdimenSUMO} we also report on the improvement \ADIMENSUMO{} from v2.4 to v2.6.
In our experimentation, we have used three different vesions of the theorem prover Vampire \cite{RiV02,KoV13}, especificaly  v2.6, v3.0 and v4.1, since they produce different results.
We set an execution time limit of 600 seconds, running on an Intel\textregistered~Xeon
\textregistered~CPU E5-2640v3@2.60GHz with 2GB of RAM memory per processor. The five tested ontologies, the sets of tests and the program for its generation, and the execution reports are freely available at \url{http://adimen.si.ehu.es}.

\begin{threeparttable}[h!] \centering
\caption{ Experimentation results for \DOLCE{}}

\begin{subtable}{0.282\linewidth} 
\begin{tabular} {|l|cc|cccc|cc|cc|}
\hline \\[-10pt]
 & \multicolumn{2}{c|}{Tests} & \multicolumn{4}{c|}{{Axioms}} & \multicolumn{2}{c|}{ Coverage} & \multicolumn{2}{c|}{Time}\\
& \multicolumn{1}{c}{\#Ts.} & \multicolumn{1}{c|}{Pr.} &  \multicolumn{1}{c}{{D.}} & \multicolumn{1}{c}{{C.S.}} & \multicolumn{1}{c}{{P.S.}} & \multicolumn{1}{c|}{{U.}} & \multicolumn{1}{c}{{\#Ax.}} & \multicolumn{1}{c|}{{Perc.}} & 
\multicolumn{1}{c}{$\leq$1} & \multicolumn{1}{c|}{$>$120}  \\
\hline \\[-10pt]
{ FT} & 2,138 & 112  & 213 & - & - & - & 71 & 27.63\% & 20 & 89  \\
{ TT} & 2,138  & 407 & - & 0 & 2 & 0 & 44 & 17.12\% & 406 & 1\\
{ Total}  & 4276 & 519 & 213 & 0 & 2 & 0 & 77 & 29.96\% & 426 & 90 \\
\hline
\end{tabular}
\caption{ \label{subtable:TestingKIFDolce} \KIFDOLCE{} }
\end{subtable}

\vspace{-3mm}
\begin{subtable}{0.282\linewidth}
\begin{tabular} {|l|cc|cccc|cc|cc|}
\hline \\[-10pt]
 & \multicolumn{2}{c|}{Tests} & \multicolumn{4}{c|}{{Axioms}} & \multicolumn{2}{c|}{ Coverage} & \multicolumn{2}{c|}{Time}\\
& \multicolumn{1}{c}{\#Ts.} & \multicolumn{1}{c|}{Pr.} &  \multicolumn{1}{c}{{D.}} & \multicolumn{1}{c}{{C.S.}} & \multicolumn{1}{c}{{P.S.}} & \multicolumn{1}{c|}{{U.}} & \multicolumn{1}{c}{{\#Ax.}} & \multicolumn{1}{c|}{{Perc.}} & 
\multicolumn{1}{c}{$\leq$1} & \multicolumn{1}{c|}{$>$120}  \\
\hline \\[-10pt]
{ FT} & 1,266 & 30  & 30 & - & - & - & 73 & 17.55\% & 30 & 0  \\
{ TT} & 1,266  & 1,093 & - & 242 & 106 & 0 & 324 & 77.88\% & 1,075 & 0\\
{ Total}  & 2,532 & 1,123 & 30 & 242 & 106 & 0 & 325 & 78.13\% & 1,105 & 0 \\
\hline
\end{tabular}
\caption{ \label{subtable:TestingCASLDolce} \CASLDOLCE{}}
\end{subtable}

\vspace{-3mm}
\begin{tablenotes}
\small
\item Tests.- \#Ts.: Number of different tests;  Pr.: Number of Proved tests.
\item Axioms.-
        Number of axioms classified as D: defective, C.S.: Completely Suitable; 
\item \hspace{1.5cm} P.S.: Partially Suitable; U.: Unknown.
\item Coverage.- $\#$Ax.: Number of axioms that are involved in the set of proofs; 
\item \hspace{1.5cm} Perc.: Percentage over the total number of axioms in the ontology. 
\item Time.- $\leq s$: Number of proofs that takes less than $s$ seconds for $s\in\{1,10\}$ , 
\item \hspace{1cm} $>$120: Number of proofs that takes more than 120 seconds.
\end{tablenotes}
\label{table:Experimentation1}
\end{threeparttable}


\subsection{Testing \DOLCE{}} \label{subsection:DOLCE}

We consider two different FOL versions of \DOLCE{}: \KIFDOLCE{}, which was obtained by following the translation described in \cite{ALR12}, and \CASLDOLCE{}, which was obtained from the simplified translation of \DOLCE{} into CASL \cite{ABK02} that is available in Hets \cite{MML07}.

In Table \ref{subtable:TestingKIFDolce}, we provide the results we have obtained testing \KIFDOLCE{} with 2,138 different tests of each type (FT and TT).
The notes below the table explain the quantities and abbreviations. We would like to point out that among the 2,138 falsity-tests, only 112 (5.24\%) were proved, while 407 truth-tests were proved. 
The proved falsity-tests enable, at first sight, only 2 axioms (up to 215) be classified as partially suitable,
whereas the remaining 213 axioms (99.07\%) are classified as defective.
However, according to the coverage measure, only 77 axioms (29.96\% of total) are used by ATPs, of which 71 axioms are used in the proofs of falsity-tests. This low coverage is due to the fact that the addition of any constant representing a $world$ or a $particular$ yields to a contradiction in \KIFDOLCE{}, where $world$ and $particular$ are fundamental concepts. The small inconsistent susbset of formulas (caused by such addition) is repeatedly used by ATPs to prove the entailment of the falsity-tests. Hence, this is a defect of \KIFDOLCE{}, though it does not mean that most of the axioms in \KIFDOLCE{} are defective.

For \CASLDOLCE{}, we have obtained 1,266 different falsity-tests to test 378 rules axioms. The results of testing \CASLDOLCE{} are summarized in Table \ref{subtable:TestingCASLDolce}. Only 30 falsity-tests (4.7\%) were proved. In all cases, the detected defects consist in trivial redundancies that are introduced by the automatic translation from CASL into TPTP syntax. In particular, the translation of \textPredicate{dj} (disjointness of universals) artificially introduces implications where its antecedent is redundant (detected by 20 different falsity-tests). 30 axioms were classified as defective, but all of them are correct although containing trivial redundancies. In addition, 1,093 truth-tests (86.33\%) were proved and 242 axioms are completely suitable (64.02\% of tested axioms) and 106 axioms as partially suitable (28.04\% of tested axioms). 324 different axioms are used in the proofs of those 1,093 truth-tests, which account for 77.88\% of total axioms. Consequently, we do not detect any substantial defect in \CASLDOLCE{} and nearly two thirds of the axioms are classified as completely suitable. 
Note also that most of the proofs were made in less than one second.
It is worth mentioning that \CASLDOLCE{} is a simplification of \KIFDOLCE{} and that they are not equivalent. For example, we cannot check whether conjecture (\ref{goal:world}) is entailed by \CASLDOLCE{} because the predicate \textPredicate{world} does not occur in it.

\subsection{Testing \KEPLER{}} \label{subsection:KEPLER}

The results of testing 5,753 of the \KEPLER{} axioms using 37,698 different falsity-tests and the same number of (unique) truth-tests are summarized in Table \ref{subtable:TestingKepler}. Of the 37,698 falsity-tests, 581 (1.54\%) were proved, enabling 610 axioms (10.60\%) to be classified as defective. Fortunately, all detected defects are either redundant subformulas or redundant axioms, such as the example at the end of Subsection \ref{subsec:redsubf}.
The level of redundancy is not surprising since the main objective of the Flyspeck project is to exhaustively check a complex mathematical proof. 
We conclude that not only is the number of defects detected in \KEPLER{} not high, but also that their nature is not critical. Indeed, only 336 different axioms are used in the proofs of the 581 proved falsity-tests.

\begin{threeparttable}[h!] \centering
\caption{  Experimentation results for \KEPLER{}}

\begin{subtable}{0.282\linewidth}
\begin{tabular} {|l|cc|cccc|cc|cc|}
\hline \\[-10pt]
 & \multicolumn{2}{c|}{Tests} & \multicolumn{4}{c|}{{Axioms}} & \multicolumn{2}{c|}{ Coverage} & \multicolumn{2}{c|}{Time}\\
& \multicolumn{1}{c}{\#Ts.} & \multicolumn{1}{c|}{Pr.} &  \multicolumn{1}{c}{{D.}} & \multicolumn{1}{c}{{C.S.}} & \multicolumn{1}{c}{{P.S.}} & \multicolumn{1}{c|}{{U.}} & \multicolumn{1}{c}{{\#Ax.}} & \multicolumn{1}{c|}{{Perc.}} & 
\multicolumn{1}{c}{$\leq$10} & \multicolumn{1}{c|}{$>$120}  \\
\hline \\[-10pt]
{ FT} & 37,698 & 581  & 610 & - & - & - & 336 & 0.43\% & 115 &  443\\
{ TT} & 37,698  & 22,825 & - & 649 & 4,298 & 196 & 2,969 & 3.78\% & 7,508 & 12,796\\
{ Total}  & 75,396 & 23,406 & 610 & 649 & 4,298 & 196  & 2,988 & 3.81\% & 7,623  & 13,239 \\
\hline
\end{tabular}
\caption{ \label{subtable:TestingKepler}  \KEPLER{}}
\end{subtable}

\begin{tablenotes}
\small

\item See table notes in Table \ref{table:Experimentation1}.
\end{tablenotes}
\label{table:Experimentation2}
\end{threeparttable}

\vspace{5mm}

With respect to suitability, 22,825 truth-tests (60.55\%) are proved, enabling us to decide that 649 axioms are completely suitable (11.28\% of tested axioms). 
Regarding the coverage measure, 2,988 formulas (3.81\% from the total of 78,500 formulas) are used in proofs and 5,753 formulas are tested (7.33\%). The small number of tested formulas is due to the fact that many formulas in the ontology do not contain any disjunctive connective, thus they are out of the scope of our technique. In other words, large parts of \KEPLER{} remain untested due to its syntactic form. We guess that a different (syntactically-based) test generation strategy could allow a larger coverage for \KEPLER{}.
It is worth mentioning that the set of axioms used in the proofs of truth- and falsity-tests are disjoint.
The execution times reflect that \KEPLER{} encodes a complex mathematical proof. In fact, 443 of the 581 proofs of falsity-tests, and 12,796 of the 22,825 proofs of truth-tests, take more than 120 seconds. 

\subsection{Testing and Improving \ADIMENSUMO{}} \label{subsection:AdimenSUMO}

In this subsection, we describe the process of testing and improving \ADIMENSUMO{} v2.4. Indeed, by correcting all the defects that were detected, we obtained \ADIMENSUMO{} v2.6, which was also evaluated.

We tested 1,622 axioms (58.24\% of rules) in \ADIMENSUMO{} v2.4 utilizing a set of 7,996 different falsity-tests. The results of this experiment  are summarized in Table \ref{subtable:TestingAdimenSUMOv2.4}. Although no defect was found by using the set of CQs proposed in \cite{ALR15}, most of the new falsity-tests (7,991) have been proved. 
Moreover, all the tested axioms were classified as defective, despite the ATPs proved the only 5 truth-tests that were built as the negation of the 5 non-proved falsity-tests).
The coverage for Adimen-SUMO v2.4 is only 1.35\%. This is due, likewise for \KIFDOLCE{}, to the fact that the proofs of the whole set of falsity-tests are based on a very small subset of axioms: 96 axioms (1.29\%). 
We correct every defective axiom, and apply  our testing methodology again to the resulting ontology, producing once more a new set of  defective axioms. Consequently, we have iteratively proceeded in this way until no new defect is found. This requires 9 iterations, after which we obtained \ADIMENSUMO{} v2.6. In total, we have corrected 21 typos, 3 redundant axioms, 17 incorrect axioms and 47 redundant subformulas (in axioms). In addition, 24 defective axioms --that were detected in preliminary experimentations-- have been corrected previously to this test.

\begin{threeparttable}[h!]
\caption{Experimentation results for \ADIMENSUMO{}}
\centering
\begin{subtable}{0.282\linewidth}
\begin{tabular} {|l|cc|cccc|cc|cc|}
\hline \\[-10pt]
 & \multicolumn{2}{c|}{Tests} & \multicolumn{4}{c|}{{Axioms}} & \multicolumn{2}{c|}{ Coverage} & \multicolumn{2}{c|}{Time}\\
& \multicolumn{1}{c}{\#Ts.} & \multicolumn{1}{c|}{Pr.} &  \multicolumn{1}{c}{{D.}} & \multicolumn{1}{c}{{CS.}} & \multicolumn{1}{c}{{P.S.}} & \multicolumn{1}{c|}{{U.}} & \multicolumn{1}{c}{{\#Ax.}} & \multicolumn{1}{c|}{{Perc.}} & 
\multicolumn{1}{c}{$\leq$10} & \multicolumn{1}{c|}{$>$120}  \\
\hline \\[-10pt]
{ FT} & 7,996 & 7,991 & 1,622 & - & - & - & 96 & 1.29\% & 2 & 7,955  \\
{ TT} & 7,996  & 5 & - & 0 & 0 & 0 & 14 & 0.19\% & 5 & 0\\
{ Total}  & 15,992 & 7,996 & 1,622 & 0 & 0 & 0 & 100 & 1.35\% & 7 & 7,955  \\
\hline
\end{tabular}
\caption{\label{subtable:TestingAdimenSUMOv2.4}\ADIMENSUMO{} v2.4}
\end{subtable}

\vspace{-3mm}
\begin{subtable}{0.282\linewidth}
\begin{tabular} {|l|cc|cccc|cc|cc|}
\hline \\[-10pt]
 & \multicolumn{2}{c|}{Tests} & \multicolumn{4}{c|}{{Axioms}} & \multicolumn{2}{c|}{ Coverage} & \multicolumn{2}{c|}{Time}\\
& \multicolumn{1}{c}{\#Ts.} & \multicolumn{1}{c|}{Pr.} &  \multicolumn{1}{c}{{D.}} & \multicolumn{1}{c}{{CS.}} & \multicolumn{1}{c}{{P.S.}} & \multicolumn{1}{c|}{{U.}} & \multicolumn{1}{c}{{\#Ax.}} & \multicolumn{1}{c|}{{Perc.}} & 
\multicolumn{1}{c}{$\leq$10} & \multicolumn{1}{c|}{$>$120}  \\
\hline \\[-10pt]
{ FT} & 8,010 & 0  & 0 & - & - & - & - & - & - & -  \\
{ TT} & 8,010  & 6,698 & - & 881 & 748 & 0 & 3,830 & 51,53\% & 6372 & 7\\
\hline
\end{tabular}
\caption{\label{subtable:TestingAdimenSUMOv2.6}\ADIMENSUMO{} v2.6}
\end{subtable}

\vspace{-3mm}
\begin{tablenotes}
\small
\item See table notes in Table \ref{table:Experimentation1}.
\end{tablenotes}
\label{table:Experimentation3} 
\end{threeparttable}

\vspace{5mm}

In \ADIMENSUMO{} v2.6, see Table \ref{subtable:TestingAdimenSUMOv2.6}, we tested 1,629 axioms (58.20\% of 2,799 rules) with a set of  8,010 different falsity-tests (and the same number of truth-tests). No falsity-test is proved, whereas 6,698 truth-tests (83.62\%) are proved. The latter result allows us to classify 881 axioms as being completely suitable (54.08\% of tested axioms) and 748 axioms as partially suitable (45.92\%). 
The coverage of Adimen-SUMO v2.6 is 51.53\%:  3,830 different axioms are used in the set of truth-test proofs. That coverage level is much more larger than the coverage we have ever accomplished using black-box testing techniques \cite{ALR17}. Thus, we can conclude that more than a half of \ADIMENSUMO{} v2.6 has been validated using the methodology introduced in this paper.

\section{Conclusions and Future Work} \label{section:conclusions}

This work offers a new practical insight towards the automatic testing of first-order logic (FOL) ontologies using ATPs. In addition, we provide formal proofs of the correctness of the proposed tests and report on the practical application of our methodology to four different FOL ontologies: \ADIMENSUMO{}, \KIFDOLCE{}, \CASLDOLCE{} and \KEPLER{}. Using our methodology, we have been able to detect some defects in all of those ontologies, although in the last two each defect can be trivially solved.
In the case of \ADIMENSUMO{}, we have applied our white-box testing approach to \ADIMENSUMO{} v2.4 and manually corrected all defects that have been detected. Following 9 iterations of the process of testing and correcting, we obtained \ADIMENSUMO{} v2.6, where our implementation does not detect any defect. Moreover, 54.32\% of the axioms in \ADIMENSUMO{} v2.6 were classified as completely suitable for reasoning purposes and 83.25\% of the tested axioms were classified as suitable. 

It is worth highlighting that testing the suitability for reasoning purposes of axioms is often much more interesting than checking the consistency of ontologies. For example, \KIFDOLCE{} is proved to be consistent by Vampire v4.1 although 99\% of its axioms are classified as defective and none is classified as suitable for reasoning purposes.  It is also particularly interesting to evaluate the competency of axioms by following black-box testing strategies, such as the one proposed in \cite{ALR15}.

All the resources that have been used and developed during this work are available in a single package, including:\footnote{The package is available at \url{http://adimen.si.ehu.es}.} a) the ontologies; b) tools for the creation of tests, the experimentation and the analysis of results; and c) the resulting tests for each ontology and the output obtained from different ATPs.

Regarding future work, ATPs are currently incorporating new techniques designed for reducing the large axiom space, such as the {\em axiom selection technique} \cite{SuD03,MeP09,HoV11,KuM16} and the {\em abstraction-refinement framework} \cite{TeW15,LoK17}.
Using these techniques we hope that a part of the test that --in the experimentation reported in this paper-- are unknown can be proved, hence we would find more defects in \ADIMENSUMO{} or classify more axioms as being completely suitable. This shall also allow us to evaluate the different techniques available and possible strategies for these techniques.
In addition, we could explore techniques to minimize the computational effort performed for testing based on automated reasoning algorithms
for determining the impact of axioms (as in \cite{NRG12}) and then perform the tests in decreasing order of the impact of the axioms that generate each test.

We have introduced a correct method of white-box test generation based on finding redundancies related to the disjunction (implication) logical operator. 
Other methods for automatic generation of tests from an axiomatization can be obtained by focusing in other kinds of redundancies. Thus, we plan to study new white-box testing strategies in the future. We also plan to apply our white-box testing methodology to other ontologies and to all the future versions of \ADIMENSUMO{}.

Finally, the proposed white-box testing opens an alternative way of checking the usefulness and non-defectiveness of axioms without proving the satisfiability of the whole ontology: in particular, by proposing tests that are specific to single axioms and by checking the use of axioms to prove the usefulness of other axioms.

\section*{Acknowledgements}
The authors are grateful to the anonymous reviewers for their valuable comments that improve this publication. 
This work has been partially funded by the Spanish Projects TUNER (TIN2015-65308-C5-1-R), COMMAS (TIN2013-46181-C2-2-R), and GRAMM (TIN2017-86727-C2-2-R) and the Basque Projects GIU15/30 and GIU18/182.

\bibliographystyle{plain}
\bibliography{white}

\begin{thebibliography}{10}

\bibitem{AGR18}
J.~{\'Alvez}, I.~Gonzalez{-}Dios, and G.~Rigau.
\newblock Cross-checking {\WORDNET{}} and {\SUMO{}} using meronymy.
\newblock In N.~Calzolari, editor, {\em Proc. of {the 11$^{th}$ Int. Conf. on
  Language Resources and Evaluation (LREC 2018)}}. European Language Resources
  Association (ELRA), 2018.

\bibitem{ALR12}
J.~{\'A}lvez, P.~Lucio, and G.~Rigau.
\newblock {\ADIMENSUMO{}}: Reengineering an ontology for first-order reasoning.
\newblock {\em Int. J. Semantic Web Inf. Syst.}, 8(4):80--116, 2012.

\bibitem{ALR15}
J.~{\'A}lvez, P.~Lucio, and G.~Rigau.
\newblock Improving the competency of first-order ontologies.
\newblock In Ken Barker and Jos\'{e}~Manu\'{e}l G\'{o}mez{-}P\'{e}rez, editors,
  {\em Proc. of the 8$^{th}$ {Int. Conf. on Knowledge Capture (K-CAP 2015)}},
  pages 15:1--15:8. ACM, 2015.

\bibitem{ALR17}
J.~{\'A}lvez, P.~Lucio, and G.~Rigau.
\newblock Black-box testing of first-order logic ontologies using {\WORDNET{}}.
\newblock {\em CoRR}, abs/1705.10217, 2017.

\bibitem{ALR16}
J.~{\'A}lvez, P.~Lucio, and G.~Rigau.
\newblock Evaluating automated theorem provers using {\ADIMENSUMO{}}.
\newblock In L.~Kov\'{a}cs and A.~Voronkov, editors, {\em Proc. of the 3$^{rd}$
  {Vampire Workshop (Vampire 2016)}}, volume~44 of {\em EPiC Series in
  Computing}, pages 74--82. EasyChair, 2017.

\bibitem{AlR18}
J.~{\'A}lvez and G.~Rigau.
\newblock Towards cross-checking {\WORDNET{}} and {\SUMO{}} using meronymy.
\newblock In P.~Vossen, C.~Fellbaum, and F.~Bond, editors, {\em Proc. of the
  9$^{th}$ {Global WordNet Conference (GWC 2018)}}, 2018.

\bibitem{ABK02}
E.~Astesiano, M.~Bidoit, H.~Kirchner, B.~Krieg-Br{\"{u}}ckner, P.~D. Mosses,
  D.~Sannella, and A.~Tarlecki.
\newblock {CASL}: the {C}ommon {A}lgebraic {S}pecification {L}anguage.
\newblock {\em Theoretical Computer Science}, 286(2):153--196, 2002.

\bibitem{ARV04}
J.~Atserias, G.~Rigau, and L.~Villarejo.
\newblock Spanish {WordNet} 1.6: Porting the {Spanish WordNet} across
  {Princeton} versions.
\newblock In {\em Proceedings of the 4$^{th}$ {Int. Conf. on Language Resources
  and Evaluation (LREC 2004)}}, 2004.

\bibitem{auer2007}
S.~Auer, C.~Bizer, G.~Kobilarov, J.~Lehmann, R.~Cyganiak, and Z.~Ives.
\newblock {DBpedia}: A nucleus for a web of open data.
\newblock In {Aberer, K. et al.}, editor, {\em The Semantic Web}, LNCS 4825,
  pages 722--735. Springer, 2007.

\bibitem{BCM10}
F.~Baader, D.~Calvanese, D.~L. McGuinness, D.~Nardi, and P.~F. Patel-Schneider.
\newblock {\em The Description Logic Handbook: Theory, Implementation and
  Applications}.
\newblock Cambridge University Press, New York, NY, USA, 2nd edition, 2010.

\bibitem{BaP10}
F.~Baader and R.~Peñaloza.
\newblock Axiom pinpointing in general tableaux.
\newblock {\em Journal of Logic and Computation}, 20(1):5--34, 2010.

\bibitem{BaS08}
F.~Baader and B.~Suntisrivaraporn.
\newblock Debugging {SNOMED} {CT} using axiom pinpointing in the description
  logic {EL+}.
\newblock In R.~Cornet and K.~A. Spackman, editors, {\em Proc. of the {3$^{rd}$
  Int. Conf. on Knowledge Representation in Medicine ((KR-MED 2008)}}, 2008.

\bibitem{BFS13}
C.~Bezerra, F.~Freitas, and F.~Santana.
\newblock Evaluating ontologies with competency questions.
\newblock In {\em 2013 IEEE/WIC/ACM International Joint Conferences on Web
  Intelligence (WI) and Intelligent Agent Technologies (IAT)}, volume~3, pages
  284--285, 2013.

\bibitem{bizer2009}
C.~Bizer, J.~Lehmann, G.~Kobilarov, S.~Auer, C.~Becker, R.~Cyganiak, and
  S.~Hellmann.
\newblock {DBpedia} - {A} crystallization point for the web of data.
\newblock {\em Web Semantics: Science, Services and Agents on the World Wide
  Web}, 7(3):154--165, 2009.

\bibitem{DaM15}
E.~Davis and G.~Marcus.
\newblock Commonsense reasoning and commonsense knowledge in artificial
  intelligence.
\newblock {\em Commun. ACM}, 58(9):92--103, aug 2015.

\bibitem{Fellbaum'98}
C.~Fellbaum, editor.
\newblock {\em {WordNet:} {A}n Electronic Lexical Database}.
\newblock MIT Press, 1998.

\bibitem{FGS13}
M.~Fern{\'a}ndez-L{\'o}pez, A.~G{\'o}mez-P{\'e}rez, and M.~C.
  Su{\'a}rez-Figueroa.
\newblock Methodological guidelines for reusing general ontologies.
\newblock {\em Data \& Knowledge Engineering}, 86:242--275, 2013.

\bibitem{Fit90}
M.~Fitting.
\newblock {\em First-order Logic and Automated Theorem Proving}.
\newblock Springer-Verlag New York, Inc., New York, NY, USA, 1990.

\bibitem{FrS05}
G.~Friedrich and K.~Shchekotykhin.
\newblock A general diagnosis method for ontologies.
\newblock In Y.~Gil, E.~Motta, V.~R. Benjamins, and M.~A. Musen, editors, {\em
  The Semantic Web -- ISWC 2005, 4th International Semantic Web Conference},
  pages 232--246, Berlin, Heidelberg, 2005. Springer.

\bibitem{GGM03}
A.~Gangemi, N.~Guarino, C.~Masolo, and A.~Oltramari.
\newblock Sweetening {WordNet} with {DOLCE}.
\newblock {\em {AI} Magazine}, 24(3):13--24, 2003.

\bibitem{Gangemi+'02}
A.~Gangemi, N.~Guarino, C.~Masolo, A.~Oltramari, and L.~Schneider.
\newblock Sweetening ontologies with {DOLCE}.
\newblock In A.~{G{\'o}mez-P{\'e}rez et al.}, editor, {\em {Knowledge Engin.
  and Knowledge Manag.: Ontologies and the Semantic Web}}, LNCS 2473, pages
  166--181. Springer, 2002.

\bibitem{GHM14}
B.~Glimm, I.~Horrocks, B.~Motik, G.~Stoilos, and Z.~Wang.
\newblock {HermiT}: An {OWL} 2 reasoner.
\newblock {\em Journal of Automated Reasoning}, 53(3):245--269, Oct 2014.

\bibitem{Gru09}
T.~Gruber.
\newblock Ontology.
\newblock In Ling Liu and M.~Tamer {\"O}zsu, editors, {\em Encyclopedia of
  Database Systems}, pages 1963--1965. Springer US, 2009.

\bibitem{GrF95}
M.~Gr{\"u}ninger and M.~S. Fox.
\newblock Methodology for the design and evaluation of ontologies.
\newblock In {\em Proc. of {the Workshop on Basic Ontological Issues in
  Knowledge Sharing (IJCAI 1995)}}, 1995.

\bibitem{GuW04}
N.~Guarino and C.~A. Welty.
\newblock An overview of ontoclean.
\newblock In S.~Staab and R.~Studer, editors, {\em Handbook on Ontologies},
  International Handbooks on Information Systems, pages 151--171. Springer
  Berlin Heidelberg, 2004.

\bibitem{HaF06}
T.~C. Hales and S.~P. Ferguson.
\newblock The {Kepler} conjecture.
\newblock {\em Discrete {\&} Computational Geometry}, 36(1):1--269, 2006.

\bibitem{HHM10}
T.~C. Hales, J.~Harrison, S.~McLaughlin, T.~Nipkow, S.~Obua, and R.~Zumkeller.
\newblock A revision of the proof of the {Kepler} conjecture.
\newblock {\em Discrete {\&} Computational Geometry}, 44(1):1--34, 2010.

\bibitem{LoK17}
Julio Cesar~Lopez Hernandez and Konstantin Korovin.
\newblock Towards an abstraction-refinement framework for reasoning with large
  theories.
\newblock In Thomas Eiter, David Sands, Geoff Sutcliffe, and Andrei Voronkov,
  editors, {\em IWIL Workshop and LPAR Short Presentations}, volume~1 of {\em
  Kalpa Publications in Computing}, pages 119--123. EasyChair, 2017.

\bibitem{HoV11}
K.~Hoder and A.~Voronkov.
\newblock Sine qua non for large theory reasoning.
\newblock In Nikolaj Bj{\o}rner and Viorica Sofronie-Stokkermans, editors, {\em
  Automated Deduction -- CADE-23}, pages 299--314, Berlin, Heidelberg, 2011.
  Springer Berlin Heidelberg.

\bibitem{Hor11}
M.~Horridge.
\newblock {\em Justification based explanation in ontologies}.
\newblock PhD thesis, University of Manchester, {UK}, 2011.

\bibitem{JQH09}
Q.~Ji, G.~Qi, and P.~Haase.
\newblock A relevance-directed algorithm for finding justifications of dl
  entailments.
\newblock In A.~G{\'o}mez-P{\'e}rez, Y.~Yu, and Y.~Ding, editors, {\em The
  Semantic Web}, pages 306--320, Berlin, Heidelberg, 2009. Springer Berlin
  Heidelberg.

\bibitem{KPH07}
A.~Kalyanpur, B.~Parsia, M.~Horridge, and E.~Sirin.
\newblock Finding all justifications of {OWL DL} entailments.
\newblock In K.~Aberer, K.S. Choi, N.~Noy, D.~Allemang, K.I. Lee, L.~Nixon,
  J.~Golbeck, P.~Mika, D.~Maynard, R.~Mizoguchi, G.~Schreiber, and
  P.~Cudr{\'e}-Mauroux, editors, {\em The Semantic Web 6th International
  Semantic WebConference, 2nd Asian Semantic Web Conference, ISWC 2007 + ASWC
  2007}, volume 4825 of {\em Lecture Notes in Computer Science}, pages
  267--280, Berlin, Heidelberg, 2007. Springer-Verlag.

\bibitem{KIF98}
{Knowledge Interchange Format} draft proposed {American National Standard}
  {(dpANS)} {NCITS.T2/98-004}, 1998.

\bibitem{KoV13}
L.~Kov\'{a}cs and A.~Voronkov.
\newblock First-order theorem proving and {Vampire}.
\newblock In N.~Sharygina and H.~Veith, editors, {\em {Computer Aided
  Verification}}, LNCS 8044, pages 1--35. Springer, 2013.

\bibitem{KuM16}
E.~Kuksa and T.~Mossakowski.
\newblock Prover-independent axiom selection for automated theorem proving in
  ontohub.
\newblock In Pascal Fontaine, Stephan Schulz, and Josef Urban, editors, {\em
  Proceedings of the 5th Workshop on Practical Aspects of Automated Reasoning
  co-located with International Joint Conference on Automated Reasoning
  {(IJCAR} 2016), Coimbra, Portugal, July 2nd, 2016.}, pages 56--68, 2016.

\bibitem{KuM11}
O.~Kutz and T.~Mossakowski.
\newblock A modular consistency proof for {DOLCE}.
\newblock In {Burgard W. et al.}, editor, {\em Proc. of the 25$^{th}$ {AAAI
  Conf. on Artif. Intell. (AAAI 2011)}}. AAAI Press, 2011.

\bibitem{Matuszek+'06}
C.~Matuszek, J.~Cabral, M.~J. Witbrock, and J.~DeOliveira.
\newblock An introduction to the syntax and content of {Cyc}.
\newblock In C.~Baral, editor, {\em Proc. of the {Spring Symposium: Formalizing
  and Compiling Background Knowledge and Its Appl. to Knowledge Repr. and
  Question Answering}}, pages 44--49. AAAI Press, 2006.

\bibitem{mccarthy1989artificial}
J.~McCarthy.
\newblock Artificial intelligence, logic and formalizing common sense.
\newblock In R.~H. Thomason, editor, {\em Philosophical Logic and Artificial
  Intelligence}, pages 161--190. Springer, 1989.

\bibitem{MeP09}
J.~Meng and L.~C. Paulson.
\newblock Lightweight relevance filtering for machine-generated resolution
  problems.
\newblock {\em Journal of Applied Logic}, 7(1):41 -- 57, 2009.
\newblock Special Issue: Empirically Successful Computerized Reasoning.

\bibitem{minsky2007emotion}
M.~Minsky.
\newblock {\em The Emotion Machine: Commonsense Thinking, Artificial
  Intelligence, and the Future of the Human Mind}.
\newblock Simon \& Schuster, 2007.

\bibitem{MML07}
T.~Mossakowski, C.~Maeder, and K.~L{\"{u}}ttich.
\newblock The heterogeneous tool set, {Hets}.
\newblock In O.~Grumberg and M.~Huth, editors, {\em Proc. of the 13$^{th}$
  {Int. Conf. on Tools and Algorithms for the Construction and Analysis of
  Systems (TACAS 2007)}}, LNCS 4424, pages 519--522. Springer, 2007.

\bibitem{MSB12}
G.~J. Myers, C.~Sandler, and T.~Badgett.
\newblock {\em The art of software testing}.
\newblock John Wiley \& Sons, Inc., Hoboken, N.J., 2012.

\bibitem{NRG12}
N.~Nikitina, S.~Rudolph, and B.~Glimm.
\newblock Interactive ontology revision.
\newblock {\em Journal of Web Semantics: Science, Services and Agents on the
  World Wide Web}, 12-13:118--130, 2012.

\bibitem{Niles+Pease'01}
I.~Niles and A.~Pease.
\newblock Towards a standard upper ontology.
\newblock In {Guarino N. et al.}, editor, {\em Proc. of the 2$^{nd}$ {Int.
  Conf. on Formal Ontology in Information Systems (FOIS 2001)}}, pages 2--9.
  ACM, 2001.

\bibitem{Niles+Pease'03}
I.~Niles and A.~Pease.
\newblock Linking lexicons and ontologies: Mapping {WordNet} to the {Suggested
  Upper Merged Ontology}.
\newblock In H.~R. Arabnia, editor, {\em Proc. of the {IEEE Int. Conf. on Inf.
  and Knowledge Engin. (IKE 2003)}}, volume~2, pages 412--416. CSREA Press,
  2003.

\bibitem{noy2001ontology}
N.~F. Noy and D.~L. McGuinness.
\newblock Ontology development 101: A guide to creating your first ontology.
\newblock Technical Report KSL-01-05 and SMI-2001-0880, Stanford Knowledge
  Systems Laboratory and Stanford Medical Informatics, 2001.

\bibitem{PSK05}
B.~Parsia, E.~Sirin, and A.~Kalyanpur.
\newblock Debugging owl ontologies.
\newblock In {\em Proceedings of the 14th International Conference on World
  Wide Web}, WWW '05, pages 633--640, New York, NY, USA, 2005. ACM.

\bibitem{PaG15}
H.~Paulheim and A.~Gangemi.
\newblock Serving dbpedia with dolce -- more than just adding a cherry on top.
\newblock In M.~Arenas, O.~Corcho, E.Simperl, M.Strohmaier, M.~d'Aquin,
  K.~Srinivas, P.~Groth, M.~Dumontier, J.~Heflin, K.~Thirunarayan, and Steffen
  S.~Staab, editors, {\em The Semantic Web - ISWC 2015}, pages 180--196, Cham,
  2015. Springer International Publishing.

\bibitem{Pea09}
A.~Pease.
\newblock {Standard Upper Ontology Knowledge Interchange Format}.
\newblock Retrieved June 18, 2009, from
  \url{http://sigmakee.cvs.sourceforge.net/sigmakee/sigma/suo-kif.pdf}, 2009.

\bibitem{PeB13}
A.~Pease and C.~Benzm{\"u}ller.
\newblock Sigma: An integrated development environment for formal ontology.
\newblock {\em AI Communications (Special Issue on Intelligent Engineering
  Techniques for Knowledge Bases)}, 26(1):79--97, 2013.

\bibitem{RiV02}
A.~Riazanov and A.~Voronkov.
\newblock The design and implementation of {V}ampire.
\newblock {\em AI Communications}, 15(2-3):91--110, 2002.

\bibitem{SHC07}
S.~Schlobach, Z.~Huang, R.~Cornet, and F.~van Harmelen.
\newblock Debugging incoherent terminologies.
\newblock {\em Journal of Automated Reasoning}, 39(3):317--349, Oct 2007.

\bibitem{SSU17}
S.~Schulz, G.~Sutcliffe, J.~Urban, and A.~Pease.
\newblock Detecting inconsistencies in large first-order knowledge bases.
\newblock In {\em Proceedings of the 26th International Conference on Automated
  Deduction (CADE 26)}, volume 10395 of {\em Lecture Notes in Computer
  Science}, pages 310--325. Springer, 2017.

\bibitem{SFF12}
K.~Shchekotykhin, G.~Friedrich, P.~Fleiss, and P.~Rodler.
\newblock Interactive ontology debugging: two query strategies for efficient
  fault localization.
\newblock {\em Journal of Web Semantics: Science, Services and Agents on the
  World Wide Web}, 12(0), 2012.

\bibitem{SPC07}
E.~Sirin, B.~Parsia, B.~Cuenca-Grau, A.~Kalyanpur, and Y.~Katz.
\newblock Pellet: A practical {OWL-DL} reasoner.
\newblock {\em Web Semantics: Science, Services and Agents on the World Wide
  Web}, 5(2):51--53, 2007.

\bibitem{StS09}
S.~Staab and R.~Studer.
\newblock {\em Handbook on Ontologies}.
\newblock Springer Publishing Company, Incorporated, 2$^{nd}$ edition, 2009.

\bibitem{Sut09}
G.~Sutcliffe.
\newblock The {TPTP} problem library and associated infrastructure.
\newblock {\em J. Automated Reasoning}, 43(4):337--362, 2009.

\bibitem{Sut16}
G.~Sutcliffe.
\newblock The {CADE} {ATP} system competition - {CASC}.
\newblock {\em {AI} Magazine}, 37(2):99--101, 2016.

\bibitem{SuD03}
G.~Sutcliffe and A.~Dvorsk{\'{y}}.
\newblock Proving harder theorems by axiom reduction.
\newblock In Ingrid Russell and Susan~M. Haller, editors, {\em Proceedings of
  the Sixteenth International Florida Artificial Intelligence Research Society
  Conference, May 12-14, 2003, St. Augustine, Florida, {USA}}, pages 108--113,
  2003.

\bibitem{TeW15}
A.~Teucke and C.~Weidenbach.
\newblock First-order logic theorem proving and model building via
  approximation and instantiation.
\newblock In Carsten Lutz and Silvio Ranise, editors, {\em Frontiers of
  Combining Systems}, pages 85--100, Cham, 2015. Springer International
  Publishing.

\bibitem{TZN18}
M.~Teymourlouie, A.~Zaeri, M.~Nematbakhsh, M.~Thimm, and S.~Staab.
\newblock Detecting hidden errors in an ontology using contextual knowledge.
\newblock {\em Expert Systems with Applications}, 95:312--323, 2018.

\bibitem{TsH06}
D.~Tsarkov and I.~Horrocks.
\newblock {FaCT++} description logic reasoner: system description.
\newblock In Ulrich Furbach and Natarajan Shankar, editors, {\em Proc. of the
  {3$^{rd}$ International Joint Conference on Automated Reasoning (IJCAR
  2006)}}, volume 4130 of {\em Lecture Notes in Computer Science}, pages
  292--297. Springer Berlin Heidelberg, 2006.

\bibitem{VAR13}
P.~Vossen, E.~Agirre, G.~Rigau, and A.~Soroa.
\newblock Kyoto: A knowledge-rich approach to the interoperable mining of
  events from text.
\newblock In A.~Oltramari, P.~Vossen, L.~Qin, and E.~Hovy, editors, {\em New
  Trends of Research in Ontologies and Lexical Resources: Ideas, Projects,
  Systems}, pages 65--90, Berlin, Heidelberg, 2013. Springer Berlin Heidelberg.

\bibitem{ZOY17}
Y.~Zhang, D.~Ouyang, and Y.~Ye.
\newblock Glass-box debugging algorithm based on unsatisfiable dependent paths.
\newblock {\em IEEE Access}, 5:18725--18736, 2017.

\end{thebibliography}

\end{document}